\documentclass{article}

\usepackage{microtype}
\usepackage{graphicx}
\usepackage{subcaption}
\usepackage{booktabs} 

\usepackage{hyperref}

\usepackage{multirow}
\usepackage{enumitem}

\usepackage[accepted]{icml2026}

\usepackage{amsmath}
\usepackage{amssymb}
\usepackage{mathtools}
\usepackage{amsthm}

\DeclareMathOperator*{\argmin}{arg\,min}
\newcommand{\assign}{\mathrel{\triangleq}} 

\usepackage[capitalize,noabbrev]{cleveref}

\setlist[enumerate]{
  leftmargin=*,
  itemsep=4pt,        
  topsep=1pt,         
  parsep=0pt,         
  partopsep=0pt
}

\theoremstyle{plain}
\newtheorem{theorem}{Theorem}[section]
\newtheorem{proposition}[theorem]{Proposition}

\theoremstyle{definition}

\theoremstyle{remark}
\newtheorem{remark}[theorem]{Remark}

\usepackage[textsize=tiny]{todonotes}

\icmltitlerunning{Differentiation Through Black-Box QP Solvers and Penalty Formulation}
\newcommand{\ee}{dXPP}

\newif\ifnarrowcolumn
\narrowcolumntrue

\begin{document}

\twocolumn[
  \icmltitle{A Penalty Approach for Differentiation Through Black-Box Quadratic Programming Solvers}



  \begin{icmlauthorlist}
    \icmlauthor{Yuxuan Linghu}{sjtu}
    \icmlauthor{Zhiyuan Liu}{uchicago}
    \icmlauthor{Qi Deng}{sjtu}
  \end{icmlauthorlist}

  \icmlaffiliation{sjtu}{Antai College of Economics and Management, Shanghai Jiao Tong University}
  \icmlaffiliation{uchicago}{The University of Chicago}

  \icmlcorrespondingauthor{Qi Deng}{qdeng24@sjtu.edu.cn}

  \icmlkeywords{Differentiable Optimization, Quadratic Programming, Implicit Differentiation}

  \vskip 0.3in
]



\printAffiliationsAndNotice{}  

\begin{abstract}
Differentiating through the solution of a quadratic program (QP) is a central problem in differentiable optimization. Most existing approaches differentiate through the Karush--Kuhn--Tucker (KKT) system, but their computational cost and numerical robustness can degrade at scale. To address these limitations, we propose dXPP, a penalty-based differentiation framework that decouples QP solving from differentiation. In the solving step (forward pass), dXPP is solver-agnostic and can leverage any black-box QP solver. In the differentiation step (backward pass), we map the solution to a smooth approximate penalty problem and implicitly differentiate through it, requiring only the solution of a much smaller linear system in the primal variables. This approach bypasses the difficulties inherent in explicit KKT differentiation and significantly improves computational efficiency and robustness. We evaluate dXPP on various tasks, including randomly generated QPs, large-scale sparse projection problems, and a real-world multi-period portfolio optimization task. Empirical results demonstrate that dXPP is competitive with KKT-based differentiation methods and achieves substantial speedups on large-scale problems. Our implementation is open source and available at \url{https://github.com/mmmmmmlinghu/dXPP}.
  
\end{abstract}

\section{Introduction}

\begin{figure*}[t]
    \centering
    \includegraphics[width=\textwidth]{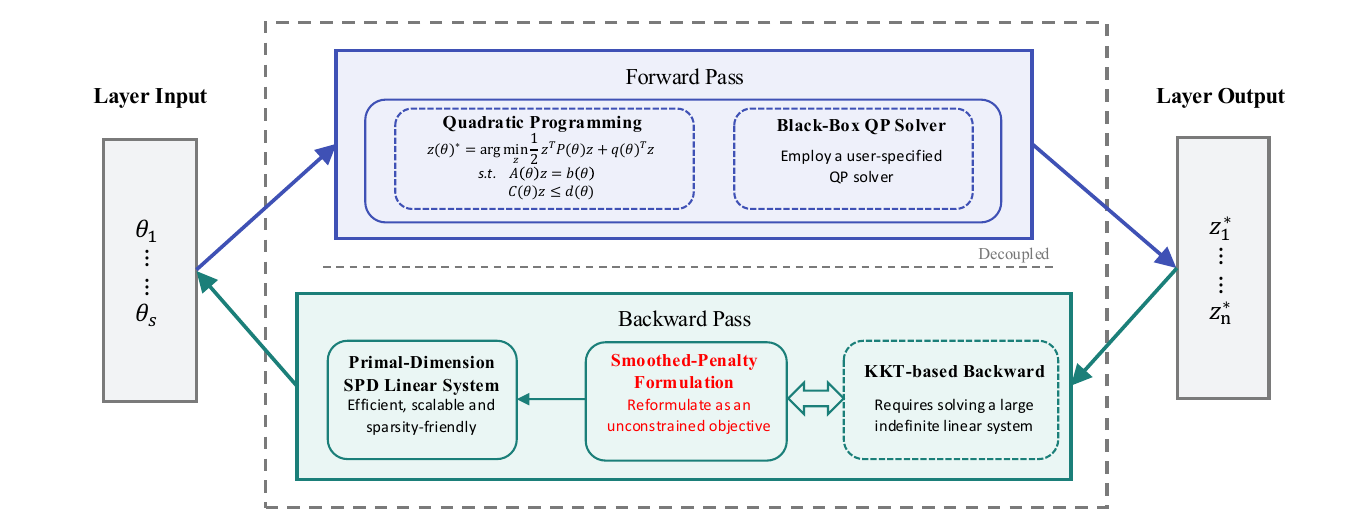}
    \caption{
    The learning workflow of \ee: the previous layer outputs parameters $\theta$ and the QP layer returns the optimal solution $z^\star$ in the forward pass; the backward pass propagates loss gradients for end-to-end learning. By decoupling solving and differentiating through a penalty-based reformulation, \ee\ enables efficient and scalable training with black-box QP solvers.
    }
    \label{fig:penalty_workflow}
\end{figure*}

Differentiable optimization has emerged as a powerful paradigm for incorporating optimization problems into end-to-end learning pipelines. By embedding optimization layers that can backpropagate gradients, this approach enables automatic differentiation through complex decision-making processes, allowing model parameters to be learned from task-level objectives. The key advantage of differentiable optimization lies in its ability to directly encode optimization structure and enforce hard constraints (e.g., feasibility, optimality) that standard neural network layers cannot naturally guarantee \citep{amos2017optnet}. This capability has proven valuable across diverse applications, including inventory control \citep{donti2017task}, portfolio optimization \citep{ye2020reinforcement}, and other data-driven decision-making tasks.

We consider the following convex quadratic program, parameterized by some
$\theta \in \mathbb{R}^s$:
\begin{equation}
\label{eq:qp}
\begin{aligned}
z^\star(\theta)
=
\argmin_{z \in \mathbb{R}^n}
\quad &
\frac{1}{2} z^\top P(\theta) z + q(\theta)^\top z \\
\text{s.t.}\quad
& A(\theta) z = b(\theta),\\
& C(\theta) z \le d(\theta),
\end{aligned}
\end{equation}
where $P(\theta)\in\mathbb{S}_{++}^n$ is symmetric positive definite,
$q(\theta)\in\mathbb{R}^n$,
$A(\theta)\in\mathbb{R}^{p\times n}$,
$b(\theta)\in\mathbb{R}^p$,
$C(\theta)\in\mathbb{R}^{m\times n}$, and
$d(\theta)\in\mathbb{R}^m$ are differentiable mappings of $\theta$.
We assume problem~\eqref{eq:qp} is feasible for all $\theta$ of interest.
Since the objective is strictly convex, the primal optimizer $z^\star(\theta)$ is unique.
For notational simplicity, we omit the explicit dependence on $\theta$ when it is clear
from context.

Our goal is to compute $\partial_\theta z^\star(\theta)\in\mathbb{R}^{n\times s}$, the sensitivity of the optimal solution with respect to the parameters.
Early approaches often couple the layer with a customized solver to compute the gradient, which often leads to a slower and less scalable forward pass in practice \citep{amos2017optnet,agrawal2019differentiable}. 
More recent solver-agnostic methods (e.g., BPQP and dQP) treat the forward pass as a black-box call to a mature optimizer \citep{pan2024bpqp, magoon2024differentiation}. Yet, their backward pass still necessitates solving a large linear system derived from the Karush--Kuhn--Tucker (KKT) equations, which can be computationally expensive for large-scale problems.

This paper focuses on differentiation through quadratic programs (QPs). While mature black-box solvers make the forward pass efficient, backpropagation remains a bottleneck: KKT-based differentiation requires solving a large saddle-point linear system whose cost can scale cubically and becomes prohibitive for dense constraints or high-dimensional QPs. We propose \ee, a new method for \textbf{\underline{d}}ifferentiating through black-box conve\textbf{\underline{x}} Q\textbf{\underline{P}} solvers based on a \textbf{\underline{p}}enalty formulation. Specifically, \ee\ incorporates the constraints into the objective function and differentiates through the resulting unconstrained penalized problem. As shown in \cref{fig:penalty_workflow}, it serves as a plug-and-play layer that pairs any advanced QP solver (e.g., Gurobi) in the forward pass with a backward pass that reduces to solving symmetric positive definite (SPD) linear systems in primal variables, enabling an efficient, scalable, and sparsity-friendly gradient module.

Our contributions can be summarized as follows:

\begin{enumerate}
   \item We present \ee, a penalty-based approach to differentiating through black-box QP solvers that bypasses KKT differentiation and reduces the backward pass to a primal-dimensional SPD linear system, which is typically smaller and better conditioned than the KKT system, and remains well-defined under degeneracy due to smoothing.
   \item We prove that the sensitivity computed from the smoothed penalty objective converges to the exact KKT sensitivity as the smoothing parameter tends to zero.
   \item We evaluate \ee\ on diverse benchmarks and a real-world portfolio optimization task. \ee\ consistently outperforms KKT-based methods and achieves state-of-the-art performance for differentiating QPs.
\end{enumerate}

\section{Related Work}

\paragraph{Differentiable Optimization} 
Classical differentiable QP layers compute gradients by implicitly differentiating the KKT optimality conditions, as introduced in OptNet \citep{amos2017optnet}. While effective at small- to medium-scale, KKT differentiation requires solving large indefinite linear systems in the backward pass and can become numerically fragile under active-set changes or degeneracy. To improve scalability, a parallel line of work differentiates through iterative-solver fixed points using residual-based implicit differentiation or Jacobian approximations \citep{sun2022alternating, butler2023efficient, butler2023scqpth}, and augmented-Lagrangian variants further enhance robustness to infeasibility \citep{bambade2024leveraging}. Beyond QPs, differentiable optimization has been extended to cone programs and disciplined convex programs \citep{agrawal2019differentiating, agrawal2019differentiable} and supported by modular implicit-differentiation toolkits \citep{blondel2022efficient, pineda2022theseus}, albeit often at the expense of QP-specific efficiency. More recently, work has revisited the backward mechanism itself: \citet{pan2024bpqp} decouple the backward pass via simplified KKT structure, while \citet{magoon2024differentiation} provide a solver-agnostic interface by characterizing the solution map through the active set.

\paragraph{Penalty Methods}
Classic work on exact penalties established that, for sufficiently large penalty parameters, local minimizers of the penalized objective coincide with those of the original constrained problem, with the required threshold governed by the optimal Lagrange multipliers \citep{evans1973exact,han1979exact,di1989exact,bertsekas2000enhanced}.
In parallel, Augmented Lagrangian Methods (ALM) combine penalty terms with dual variables and remain a core paradigm for constrained nonlinear optimization \citep{hestenes1969multiplier,powell1969method,rockafellar1974augmented}.
A key practical issue is that widely used exact penalties are nonsmooth, motivating a broad literature on smoothing techniques that trade controlled approximation error for better numerical behavior \citep{nesterov2005smooth,meng2006smoothing,xu2011penalty}.
More recently, smooth exact-penalty constructions have been developed that preserve exactness while improving compatibility with gradient-based algorithms in general constrained settings \citep{dolgopolik2016smooth,xu2019smoothing,estrin2020implementing}.
These ideas also connect to modern constrained learning perspectives, including exact Augmented Lagrangian formulations and scalable large-scale solvers \citep{hong2023constrained}, as well as recent surveys of ALM and their advances \citep{deng2025augmented}.

\section{Method}

In this section, we develop \ee, a framework for differentiating through convex quadratic programs. Our approach adopts a smoothed penalty reformulation to reduce the backward pass to solving a primal-scale SPD linear system. We begin by reviewing differentiation through the KKT system.

\subsection{Differentiating Through the KKT Conditions}
\label{sec:kkt_penalty}

Since \eqref{eq:qp} is a strictly convex QP with affine constraints, its optimality is characterized
by the Karush--Kuhn--Tucker (KKT) conditions. Introducing the Lagrange multipliers
$\nu \in \mathbb{R}^p$ for the equality constraints and $\mu \in \mathbb{R}^m$ for the
inequality constraints, the triplet $(z^\star,\nu^\star,\mu^\star)$ is primal-dual optimal if and only if it satisfies
\begin{equation}
\label{eq:kkt}
\begin{aligned}
&
P z^\star + q + A^\top \nu^\star + C^\top \mu^\star = 0,\\
&
A z^\star = b, \qquad C z^\star \le d,\\
&
\mu^\star \ge 0,\\
&
\mu_i^\star\,(Cz^\star-d)_i = 0,\quad i=1,\dots,m.
\end{aligned}
\end{equation}
Traditional layers obtain $\partial_\theta z^\star$ by implicitly differentiating the KKT conditions.
Let $U(z,\nu,\mu;\theta)$ collect the equations in~\eqref{eq:kkt} as
$U(z,\nu,\mu;\theta)=
\begin{bmatrix}
Pz+q+A^\top \nu + C^\top \mu\\
Az-b\\
\operatorname{diag}(\mu)\,(Cz-d)
\end{bmatrix}.$

Differentiating $U(z^\star,\nu^\star,\mu^\star;\theta)=0$ with respect to $\theta$ yields the linear system
{
\begin{equation}
\label{eq:kkt_diff_naive}
\ifnarrowcolumn
\small
\begin{aligned}
&\begin{bmatrix}
P & A^\top & C^\top\\
A & 0 & 0\\
\operatorname{diag}(\mu^\star)C & 0 & \operatorname{diag}(Cz^\star-d)
\end{bmatrix}
\begin{bmatrix}
\partial_\theta z^\star\\[1pt]
\partial_\theta \nu^\star\\[1pt]
\partial_\theta \mu^\star
\end{bmatrix}\\
&\quad =-
\begin{bmatrix}
(\partial_\theta P)\,z^\star + \partial_\theta q
+ (\partial_\theta A^\top)\,\nu^\star + (\partial_\theta C^\top)\,\mu^\star\\[2pt]
(\partial_\theta A)\,z^\star - \partial_\theta b\\[2pt]
\operatorname{diag}(\mu^\star)\bigl((\partial_\theta C)\,z^\star - \partial_\theta d\bigr)
\end{bmatrix}.
\end{aligned}
\else
\begin{bmatrix}
P & A^\top & C^\top\\
A & 0 & 0\\
\operatorname{diag}(\mu^\star)C & 0 & \operatorname{diag}(Cz^\star-d)
\end{bmatrix}
\begin{bmatrix}
\partial_\theta z^\star\\[1pt]
\partial_\theta \nu^\star\\[1pt]
\partial_\theta \mu^\star
\end{bmatrix}
=-
\begin{bmatrix}
(\partial_\theta P)\,z^\star + \partial_\theta q
+ (\partial_\theta A^\top)\,\nu^\star + (\partial_\theta C^\top)\,\mu^\star\\[2pt]
(\partial_\theta A)\,z^\star - \partial_\theta b\\[2pt]
\operatorname{diag}(\mu^\star)\bigl((\partial_\theta C)\,z^\star - \partial_\theta d\bigr)
\end{bmatrix}.
\fi
\end{equation}
}

Under the linear independence constraint qualification~(LICQ) and strict complementarity,
the linear system in~\eqref{eq:kkt_diff_naive} is nondegenerate, so the implicit system admits a unique solution for
$\partial_\theta(z^\star,\nu^\star,\mu^\star)$.
Degeneracy occurs precisely when there exists a weakly active
inequality with $(Cz^\star-d)_i=0$ but $\mu_i^\star=0$ \citep{amos2017optnet}.
Define the active set
$\mathcal A = \{i : (Cz^\star-d)_i = 0\}$ and the inactive set $\mathcal I = \{1, \dots, m\} \setminus \mathcal A$. For any inactive constraint $i\in\mathcal I$,
we have $(Cz^\star-d)_i<0$ and hence $\mu_i^\star=0$ and $\partial_\theta \mu_i^\star=0$.
Therefore, we can eliminate the inactive multipliers and retain only the active rows, yielding the reduced system
\begin{equation}
\label{eq:kkt_diff_reduced}
\ifnarrowcolumn
\small
\begin{aligned}
&\begin{bmatrix}
P & A^\top & C_{\mathcal A}^\top\\
A & 0 & 0\\
C_{\mathcal A} & 0 & 0
\end{bmatrix}
\begin{bmatrix}
\partial_\theta z^\star\\[1pt]
\partial_\theta \nu^\star\\[1pt]
\partial_\theta \mu_{\mathcal A}^\star
\end{bmatrix}\\
&\quad =-
\begin{bmatrix}
(\partial_\theta P)\,z^\star + \partial_\theta q
+ (\partial_\theta A^\top)\,\nu^\star + (\partial_\theta C_{\mathcal A}^\top)\,\mu_{\mathcal A}^\star\\[2pt]
(\partial_\theta A)\,z^\star - \partial_\theta b\\[2pt]
(\partial_\theta C_{\mathcal A})\,z^\star - \partial_\theta d_{\mathcal A}
\end{bmatrix},
\end{aligned}
\else
\begin{bmatrix}
P & A^\top & C_{\mathcal A}^\top\\
A & 0 & 0\\
C_{\mathcal A} & 0 & 0
\end{bmatrix}
\begin{bmatrix}
\partial_\theta z^\star\\[1pt]
\partial_\theta \nu^\star\\[1pt]
\partial_\theta \mu_{\mathcal A}^\star
\end{bmatrix}
=-
\begin{bmatrix}
(\partial_\theta P)\,z^\star + \partial_\theta q
+ (\partial_\theta A^\top)\,\nu^\star + (\partial_\theta C_{\mathcal A}^\top)\,\mu_{\mathcal A}^\star\\[2pt]
(\partial_\theta A)\,z^\star - \partial_\theta b\\[2pt]
(\partial_\theta C_{\mathcal A})\,z^\star - \partial_\theta d_{\mathcal A}
\end{bmatrix},
\fi
\end{equation}
where $C_{\mathcal A}$, $d_{\mathcal A}$, and $\mu_{\mathcal A}^\star$ denote the restrictions of $C$, $d$, and $\mu^\star$ to the active inequalities. 
Here, we define $\partial_\theta z^\star$ computed in \eqref{eq:kkt_diff_reduced} as $Z_{\texttt{KKT}}$.

\subsection{Penalty Reformulation and Implicit Differentiation}
\label{sec:softplus_ift}

For a fixed parameter $\theta$, let the objective function be
$f(z) = \frac{1}{2} z^\top P z + q^\top z$.
Following \citet{nocedal2006numerical}, we introduce the exact penalty objective:
\begin{equation}
\label{eq:exact_penalty}
F(z; \theta, \rho, \alpha)
=
f(z)
+
\rho \|Az-b\|_1
+
\alpha \|[Cz-d]_+\|_1,
\end{equation}
where $[u]_+ = \max\{u, 0\}$ is taken element-wise. The corresponding penalized optimization problem is
\begin{equation}\label{pb:penalty}
\min_{z} F(z; \theta, \rho, \alpha).
\end{equation}
The necessary stationary condition for this penalty problem is given by:
\begin{equation}
\label{eq:penalty_stationarity}
0 \in
\nabla f(z)
+
\rho A^\top \partial \|Az-b\|_1
+
\alpha C^\top \partial \|[Cz-d]_+\|_1 .
\end{equation}
Specifically, at any $z$ for which $Az = b$, each component of
$\partial\|Az-b\|_1$ belongs to $[-1,1]$; and at any $z$ such that
$Cz \le d$, each component of $\partial\|[Cz-d]_+\|_1$ is zero if $(Cz-d)_i < 0$ and lies in $[0,1]$ if $(Cz-d)_i = 0$.
The following standard result shows that, for sufficiently large penalty weights, the exact penalty formulation is equivalent to the original QP. For completeness, the proof is given in Appendix~\ref{app:exact_penalty_proof}.
\begin{proposition}
\label{prop:exact_penalty_exactness}
Let $(z^\star, \nu^\star, \mu^\star)$ be an optimal primal--dual solution of the QP~\eqref{eq:qp}. If the penalty weights satisfy
\begin{equation}
\label{eq:penalty_params_prop}
\rho \ge \|\nu^\star\|_\infty,
\quad
\alpha \ge \|\mu^\star\|_\infty,
\end{equation}
then $z^\star$ is also a minimizer of the exact penalty problem~\eqref{pb:penalty}. 
\end{proposition}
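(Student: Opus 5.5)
Since the QP is convex and the exact penalty objective $F(\cdot;\theta,\rho,\alpha)$ in~\eqref{eq:exact_penalty} is convex (a convex quadratic plus nonnegative multiples of convex piecewise-linear functions), it suffices to show that $0\in\partial F(z^\star)$, i.e., that $z^\star$ satisfies the stationarity inclusion~\eqref{eq:penalty_stationarity}. Alternatively, and perhaps more transparently, I will prove the global inequality $F(z)\ge F(z^\star)$ for all $z$ directly via a Lagrangian lower bound. I will follow the second route and mention the first as a consistency check.

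The key steps are as follows. First, since $z^\star$ is feasible, $Az^\star=b$ and $[Cz^\star-d]_+=0$, so $F(z^\star)=f(z^\star)$. Second, consider the Lagrangian
\[
L(z)=f(z)+\nu^{\star\top}(Az-b)+\mu^{\star\top}(Cz-d).
\]
By the first KKT equation in~\eqref{eq:kkt}, $\nabla L(z^\star)=0$. Since $L$ is convex in $z$, $z^\star$ minimizes $L$, so for every $z$ we have $L(z)\ge L(z^\star)=f(z^\star)$, where the last equality uses feasibility together with complementary slackness $\mu_i^\star(Cz^\star-d)_i=0$. Third, bound the multiplier terms by the penalty terms. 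By Hölder's inequality,
\[
\nu^{\star\top}(Az-b)\le \|\nu^\star\|_\infty\|Az-b\|_1\le \rho\|Az-b\|_1 .
\]
Since $\mu^\star\ge 0$, we have $\mu^{\star\top}(Cz-d)\le \mu^{\star\top}[Cz-d]_+\le \|\mu^\star\|_\infty\|[Cz-d]_+\|_1\le\alpha\|[Cz-d]_+\|_1$. Combining these gives
\[
F(z)\ge L(z)\ge f(z^\star)=F(z^\star),
\]
which is the claim.

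The only step needing care is the inequality-constraint bound. There we must use $\mu^\star\ge0$ to replace $Cz-d$ by its positive part (negative components only decrease the inner product), and then apply the $\ell_\infty$/$\ell_1$ Hölder bound. As a sanity check via the subdifferential route, the choices $\nu^\star/\rho\in[-1,1]^p$ and $\mu^\star/\alpha\in[0,1]^m$ are valid subgradient selections at $z^\star$ (with $\mu_i^\star=0$ on the inactive set), so $0\in\partial F(z^\star)$ exactly under~\eqref{eq:penalty_params_prop}. Either way, no constraint qualification beyond the existence of the KKT triple assumed in the statement is needed.
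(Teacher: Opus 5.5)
Your proof is correct, but your main argument takes a different route from the paper. The paper's proof is essentially your ``sanity check''. It sets $s=\nu^\star/\rho\in[-1,1]^p$ and $t_i=\mu_i^\star/\alpha$ on active rows ($t_i=0$ on inactive rows), and notes that these are subgradients of $\|Az-b\|_1$ and $\|[Cz-d]_+\|_1$ at $z^\star$. KKT stationarity then gives $0=\nabla f(z^\star)+\rho A^\top s+\alpha C^\top t\in\partial F(z^\star)$, and convexity of $F$ concludes.

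You instead run the weak-duality/saddle-point chain $F(z)\ge L(z)\ge L(z^\star)=f(z^\star)=F(z^\star)$. Here H\"older's inequality and $\mu^\star\ge 0$ do the work that subgradient membership does in the paper. Your route is more elementary, since it needs no subdifferential calculus at all. Because it never divides by $\rho$ or $\alpha$, it also covers $\rho=0$ or $\alpha=0$ automatically.

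The paper has to dispose of that case with a separate without-loss-of-generality remark. Its stated justification there (that a vanishing multiplier makes $z^\star$ the unconstrained minimizer of $f$) is only right when both multipliers vanish. The clean fix, which your sanity check also needs, is to take the zero subgradient when $\rho=0$ (which forces $\nu^\star=0$) or when $\alpha=0$.

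The paper's route, for its part, directly verifies the penalty stationarity inclusion stated just before the proposition. It also exhibits $\nu^\star$ and $\mu^\star$ as scaled subgradients of the penalty terms. That is exactly the viewpoint the paper later uses when it treats $\rho\,\psi_\delta'$ and $\alpha\,p_\delta'$ as proxies for the Lagrange multipliers.
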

\begin{remark}
While the exact penalty problem may seem more tractable, recovering the solution to the original QP generally requires careful selection of the penalty parameters, which in turn typically necessitates knowledge of the optimal primal--dual solution. Fortunately, modern QP solvers provide optimal dual variables as a byproduct at little additional computational cost, enabling the explicit construction of an equivalent penalty formulation. Importantly, in our framework, we do not directly solve the penalty problem; rather, we employ this reformulation solely to facilitate a more efficient implicit differentiation scheme.
\end{remark}

The exact penalty objective is nonsmooth. To obtain stable derivatives through the penalty reformulation, we replace the hinge and $\ell_1$-norm terms in~\eqref{eq:exact_penalty} with a softplus-smoothed approximation. Following \citet{li2023softplus}, we define the softplus function
\[
p_\delta(t) \assign \delta \log\bigl(1+\exp(t/\delta)\bigr), \ \text{where } \delta > 0.
\]
It is easy to see that $p_\delta(\cdot)$ is convex and twice continuously differentiable.
We rewrite the equality $\ell_1$-norm terms using $|u|=[u]_+ + [-u]_+$. We then apply softplus smoothing to the resulting hinge penalties for both equality and inequality constraints.
Accordingly, we define the smoothed penalty objective
\begin{equation}
\label{eq:phi_delta}
\ifnarrowcolumn
\begin{aligned}[t]
& \Phi_\delta(z;\theta)
= f(z) + \alpha \sum_{i=1}^m p_\delta\bigl((Cz-d)_i\bigr) \\
&\quad + \rho \sum_{j=1}^p \Bigl(
 p_\delta\bigl((Az-b)_j\bigr) + p_\delta\bigl(-(Az-b)_j\bigr)
\Bigr).
\end{aligned}
\else
\Phi_\delta(z;\theta)
= f(z) + \alpha \sum_{i=1}^m p_\delta\bigl((Cz-d)_i\bigr)
 + \rho \sum_{j=1}^p \Bigl(
 p_\delta\bigl((Az-b)_j\bigr) + p_\delta\bigl(-(Az-b)_j\bigr)
\Bigr).
\fi
\end{equation}
Let $z^\star_\delta(\theta)=\arg\min_z \Phi_\delta(z;\theta)$. We differentiate the stationarity condition $\nabla_z \Phi_\delta\bigl(z^\star_\delta;\theta\bigr)=0$ with respect to $\theta$. By the implicit function theorem, we have
\begin{equation}
\label{eq:ift_formula}
\partial_\theta z^\star_\delta
=
-\Bigl(\nabla^2_{zz}\Phi_\delta\bigl(z^\star_\delta;\theta\bigr)\Bigr)^{-1}\nabla^2_{z\theta}\Phi_\delta\bigl(z^\star_\delta;\theta\bigr).
\end{equation}

\subsection{Plug-in Sensitivity and Consistency Analysis}
\label{sec:plug}

In practice, we do not compute the exact minimizer $z^\star_\delta$ of the smoothed penalty objective~\eqref{eq:phi_delta}. Rather, we utilize the primal and dual solution obtained from the QP solver and substitute it into \eqref{eq:ift_formula}.
We now assume strict complementarity, whereby the active set $\mathcal{A}$ remains locally invariant. Further, we assume that the inactive constraints are strictly separated; that is, there exists $\gamma > 0$ such that $(Cz^\star - d)_i \leq -\gamma$ for all $i \in \mathcal{I}$. 

We set up some notation. First, define the stacked active constraint function
$
g(z, \theta) \assign 
\begin{bmatrix}
Az - b \\
C_{\mathcal{A}}z - d_{\mathcal{A}}
\end{bmatrix} \in \mathbb{R}^{p + |\mathcal{A}|}
$
with its Jacobian given by
$
B \assign \nabla_z g(z, \theta) =
\begin{bmatrix}
A \\
C_{\mathcal{A}}
\end{bmatrix} \in \mathbb{R}^{(p + |\mathcal{A}|) \times n}.
$
Let $g_\theta \assign \partial_\theta g(z^\star, \theta) \in \mathbb{R}^{(p + |\mathcal{A}|) \times s}$ and $G \assign \nabla^2_{z\theta} f(z^\star; \theta) \in \mathbb{R}^{n \times s}$. We further define 
$W \assign 
\begin{bmatrix}
\tfrac{\rho}{2} I_p & 0 \\
0 & \tfrac{\alpha}{4} I_{|\mathcal{A}|}
\end{bmatrix}.$

We begin by deriving the expression for $\nabla^2_{zz}\Phi_\delta\bigl(\cdot;\theta\bigr)$ and subsequently evaluate it at $z^\star$. The resulting formula for $\nabla^2_{zz}\Phi_\delta\bigl(z^\star;\theta\bigr)$ is presented in Proposition~\ref{prop:zz}, with the proof detailed in Appendix~\ref{app:zz_proof}.
\begin{proposition}
\label{prop:zz}
Let $C_{\mathcal I}$ and $d_{\mathcal I}$ denote the restriction of $(C,d)$ to inactive rows, and
$s_{\mathcal I}= C_{\mathcal I}z^\star-d_{\mathcal I}\in\mathbb{R}^{|\mathcal I|}$.
Define
$E_\delta = \alpha\, C_{\mathcal I}^\top \mathrm{diag}\bigl(p_\delta''(s_{\mathcal I})\bigr)\, C_{\mathcal I}.$ Then
\begin{equation}
\nabla^2_{zz}\Phi_\delta\bigl(z^\star;\theta\bigr)=P + \tfrac{1}{\delta} B^\top W B + E_\delta.
\end{equation}
Moreover, $\lim_{\delta \to 0} \|E_\delta\|_2 = 0$.
\end{proposition}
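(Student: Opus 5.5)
The plan is to compute the Hessian of $\Phi_\delta$ term by term, evaluate it at $z^\star$ using the KKT structure, and then bound the inactive-constraint remainder using the separation $\gamma$.

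\textbf{Derivatives of the softplus.} From $p_\delta(t)=\delta\log(1+e^{t/\delta})$ we get
$$p_\delta'(t)=\sigma(t/\delta), \qquad p_\delta''(t)=\tfrac{1}{\delta}\,\sigma(t/\delta)\bigl(1-\sigma(t/\delta)\bigr),$$
where $\sigma$ is the logistic function. In particular $p_\delta''(0)=\tfrac{1}{4\delta}$, and $p_\delta''$ is even.

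\textbf{Hessian at a general $z$.} By the chain rule, each term of \eqref{eq:phi_delta} contributes a rank-one piece:
$$\nabla^2_{zz}\Phi_\delta(z;\theta)=P+\alpha\sum_{i=1}^m p_\delta''\bigl((Cz-d)_i\bigr)\,c_ic_i^\top+\rho\sum_{j=1}^p\Bigl[p_\delta''\bigl((Az-b)_j\bigr)+p_\delta''\bigl(-(Az-b)_j\bigr)\Bigr]a_ja_j^\top .$$
Here $c_i^\top$ and $a_j^\top$ denote the rows of $C$ and $A$.

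\textbf{Evaluation at $z^\star$.} Feasibility gives $(Az^\star-b)_j=0$ for every $j$, so each equality coefficient equals $\rho\cdot 2\cdot\tfrac{1}{4\delta}=\tfrac{\rho}{2\delta}$. Active inequalities have $(Cz^\star-d)_i=0$, so their coefficient is $\tfrac{\alpha}{4\delta}$. Collecting these terms gives
$$\tfrac{\rho}{2\delta}A^\top A+\tfrac{\alpha}{4\delta}C_{\mathcal A}^\top C_{\mathcal A}=\tfrac{1}{\delta}B^\top WB,$$
using the block-diagonal structure of $W$ and $B=[A;C_{\mathcal A}]$. The inactive rows contribute exactly $E_\delta$, which yields the stated formula.

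\textbf{The limit $\|E_\delta\|_2\to0$.} This is the only step needing an estimate, and it is mild. For $i\in\mathcal I$ we have $s_i\le-\gamma<0$. Using $\sigma(u)(1-\sigma(u))\le e^{-|u|}$, we get
$$0\le p_\delta''(s_i)\le \tfrac{1}{\delta}e^{-|s_i|/\delta}\le \tfrac{1}{\delta}e^{-\gamma/\delta}\longrightarrow 0 \quad\text{as }\delta\to0.$$
Hence
$$\|E_\delta\|_2\le \alpha\,\|C_{\mathcal I}\|_2^2\,\max_{i\in\mathcal I}p_\delta''(s_i)\le \alpha\,\|C_{\mathcal I}\|_2^2\,\delta^{-1}e^{-\gamma/\delta}\to0 .$$
The one point to watch is that the $1/\delta$ prefactor is dominated by the exponential decay. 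The uniform gap $\gamma$ is what guarantees this, even if $\mathcal I$ had rows arbitrarily close to active.
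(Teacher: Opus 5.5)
Your proposal is correct and follows essentially the same route as the paper's proof: the same logistic expressions for $p_\delta'$ and $p_\delta''$, evaluation at $z^\star$ using $p_\delta''(0)=\tfrac{1}{4\delta}$ (the paper writes the equality part via $\psi_\delta''(0)=\tfrac{1}{2\delta}$, which is your $2p_\delta''(0)$), and the bound $p_\delta''(t)\le\tfrac{1}{\delta}e^{-\gamma/\delta}$ for $t\le-\gamma$. The only differences are that you justify that bound via $\sigma(u)(1-\sigma(u))\le e^{-|u|}$ and make the constant explicit as $\alpha\|C_{\mathcal I}\|_2^2$, whereas the paper just asserts some $c_E$; your closing remark about rows ``arbitrarily close to active'' is unnecessary, since for fixed $\theta$ the gap $\gamma=\min_{i\in\mathcal I}|s_i|>0$ exists automatically because $\mathcal I$ is finite.
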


We now proceed to compute $\nabla^2_{z\theta}\Phi_\delta\bigl(z^\star_\delta;\theta\bigr)$ in closed form.
The proof is deferred to Appendix~\ref{app:ztheta_delta_proof}.

\begin{proposition}
\label{prop:ztheta_at_zdelta}
With $\psi_\delta(t)=p_\delta(t)+p_\delta(-t)$ applied elementwise, we have
\begin{equation}
\label{eq:ztheta_exact_at_zdelta}
\begin{aligned}
& \nabla^2_{z\theta}\Phi_\delta\bigl(z^\star_\delta;\theta\bigr)\\
& = \nabla^2_{z\theta} f\bigl(z^\star_\delta;\theta\bigr)\\
& \quad +(\partial_\theta A)^\top\bigl(\rho\,\psi_\delta'(Az^\star_\delta-b)\bigr)\\
& \quad +(\partial_\theta C_{\mathcal A})^\top
\bigl(\alpha\,p_\delta'(C_{\mathcal A}z^\star_\delta-d_{\mathcal A})\bigr)\\
& \quad +A^\top \mathrm{Diag}\bigl(\rho\,\psi_\delta''(Az^\star_\delta-b)\bigr)
(\partial_\theta Az^\star_\delta-\partial_\theta b)\\
& \quad +C_{\mathcal A}^\top
\mathrm{Diag}\bigl(\alpha\,p_\delta''(C_{\mathcal A}z^\star_\delta-d_{\mathcal A})\bigr)
(\partial_\theta C_{\mathcal A}z^\star_\delta-\partial_\theta d_{\mathcal A})\\
& \quad +(\partial_\theta C_{\mathcal I}^\top)
\bigl(\alpha\,p_\delta'(C_{\mathcal I}z^\star_\delta-d_{\mathcal I})\bigr)\\
& \quad +C_{\mathcal I}^\top
\mathrm{Diag}\bigl(\alpha\,p_\delta''(C_{\mathcal I}z^\star_\delta-d_{\mathcal I})\bigr)
\partial_\theta(C_{\mathcal I}z^\star_\delta-d_{\mathcal I}).
\end{aligned}
\end{equation}
\end{proposition}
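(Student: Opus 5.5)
The plan is a direct chain-rule computation. I treat $\rho,\alpha$ as constants and note that $\theta$ enters $\Phi_\delta$ through $(P,q)$ in $f$ and through $(A,b,C,d)$ in the penalty terms. I then compute $\nabla_z\Phi_\delta(z;\theta)$ for general $(z,\theta)$, differentiate that vector in $\theta$ with $z$ held fixed, and only at the end set $z=z^\star_\delta$. Throughout, $\partial_\theta A\, z$ and similar expressions denote the $\theta$-Jacobian of $Az$ at fixed $z$.

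First I write the gradient. Let $\psi_\delta(t)=p_\delta(t)+p_\delta(-t)$, applied elementwise. The equality penalty is $\rho\sum_j\psi_\delta((Az-b)_j)$, and the chain rule gives
\begin{equation*}
\nabla_z\Phi_\delta(z;\theta)=\nabla_z f(z;\theta)+\rho A^\top\psi_\delta'(Az-b)+\alpha C^\top p_\delta'(Cz-d).
\end{equation*}
I split the last term row-wise as $\alpha C_{\mathcal A}^\top p_\delta'(C_{\mathcal A}z-d_{\mathcal A})+\alpha C_{\mathcal I}^\top p_\delta'(C_{\mathcal I}z-d_{\mathcal I})$. This is legitimate because $C^\top v=\sum_i c_i v_i$ decomposes over any partition of the rows, and the partition $\mathcal A,\mathcal I$ is fixed, being determined by $z^\star$ at the current $\theta$.

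Next I differentiate each term in $\theta$ by the product rule, treating $M^\top h(Mz-c)$ as a product of a $\theta$-dependent matrix and a $\theta$-dependent vector. The first factor contributes $(\partial_\theta M^\top)\,h(Mz-c)$. The second contributes $M^\top\mathrm{Diag}(h'(Mz-c))\,(\partial_\theta M\,z-\partial_\theta c)$, since $h$ acts elementwise and its Jacobian is therefore diagonal. I apply this three times:
\begin{itemize}
\item with $M=A$, $c=b$, $h=\rho\psi_\delta'$;
\item with $M=C_{\mathcal A}$, $c=d_{\mathcal A}$, $h=\alpha p_\delta'$;
\item with $M=C_{\mathcal I}$, $c=d_{\mathcal I}$, $h=\alpha p_\delta'$.
\end{itemize}
The $f$ term gives $\nabla^2_{z\theta}f$, which expands to $(\partial_\theta P)z+\partial_\theta q$. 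Evaluating everything at $z=z^\star_\delta$ produces exactly the seven terms listed in the statement, with $(\partial_\theta A)^\top$ understood as the tensor contraction $(\partial_\theta A^\top)$, consistent with the notation of \eqref{eq:kkt_diff_naive}.

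The only real obstacle is bookkeeping. The Jacobian of the matrix-valued $A(\theta)$ is a third-order tensor, so I must fix a contraction convention in which $(\partial_\theta A^\top)v\in\mathbb{R}^{n\times s}$ has columns $(\partial_{\theta_k}A)^\top v$. I will check that this matches the statement by working columnwise in $\theta_k$, where everything reduces to ordinary matrix calculus:
\begin{equation*}
\partial_{\theta_k}\bigl[A^\top h(Az-b)\bigr]=(\partial_{\theta_k}A)^\top h+A^\top\mathrm{Diag}(h')\bigl((\partial_{\theta_k}A)z-\partial_{\theta_k}b\bigr).
\end{equation*}
Stacking these columns over $k$ gives the claimed formula. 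No approximation and no use of strict complementarity is needed, apart from fixing the index split $\mathcal A,\mathcal I$.
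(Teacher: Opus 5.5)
Your proposal is correct and follows essentially the same route as the paper: write $\nabla_z\Phi_\delta$, differentiate in $\theta$ at fixed $z$ by the product rule (elementwise maps giving diagonal Jacobians), evaluate at $z^\star_\delta$, and split $C,d$ into active and inactive rows. The only difference is that you split before differentiating while the paper splits afterwards, which is immaterial because the split is a fixed regrouping of the sum. Your columnwise check in $\theta_k$ also usefully pins down the contraction convention behind the paper's mixed notation $(\partial_\theta A)^\top$ versus $(\partial_\theta A^\top)$.
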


As previously discussed, the evaluation of $\nabla^2_{z\theta}\Phi_\delta(z^\star_\delta;\theta)$ is performed at $z^\star_\delta$. In practice, we employ a plug-in approach that utilizes the solution obtained from the QP solver. 
First, consider the term
\ifnarrowcolumn
\begin{equation*}
\begin{aligned}
&(\partial_\theta A)^\top\bigl(\rho\,\psi_\delta'(Az^\star_\delta-b)\bigr)\\
&\quad + (\partial_\theta C_{\mathcal A})^\top\bigl(\alpha\,p_\delta'(C_{\mathcal A}z^\star_\delta-d_{\mathcal A})\bigr).
\end{aligned}
\end{equation*}
\else
\[
(\partial_\theta A)^\top\bigl(\rho\,\psi_\delta'(Az^\star_\delta-b)\bigr)
+ (\partial_\theta C_{\mathcal A})^\top\bigl(\alpha\,p_\delta'(C_{\mathcal A}z^\star_\delta-d_{\mathcal A})\bigr),
\]
\fi
where $\rho\,\psi_\delta'(Az^\star_\delta-b)$ and $\alpha\,p_\delta'(C_{\mathcal A}z^\star_\delta-d_{\mathcal A})$ act as proxies for the Lagrange multipliers. To achieve a more accurate representation, we replace this contribution with $(\partial_\theta B^\top)\,y^\star$, where $y^\star \in \mathbb{R}^{p + |\mathcal A|}$ denotes the actual stacked KKT multipliers associated with the active constraints.

Next, for the inactive constraints, define
\ifnarrowcolumn
\begin{equation*}
\begin{aligned}
F_\delta \assign
&(\partial_\theta C_{\mathcal I}^\top)\bigl(\alpha\,p_\delta'(C_{\mathcal I}z^\star_\delta - d_{\mathcal I})\bigr)\\
&\quad + C_{\mathcal I}^\top \mathrm{Diag}\bigl(\alpha\,p_\delta''(C_{\mathcal I}z^\star_\delta - d_{\mathcal I})\bigr)\,
\partial_\theta(C_{\mathcal I}z^\star_\delta - d_{\mathcal I}).
\end{aligned}
\end{equation*}
\else
\[
F_\delta \assign
(\partial_\theta C_{\mathcal I}^\top)\bigl(\alpha\,p_\delta'(C_{\mathcal I}z^\star_\delta - d_{\mathcal I})\bigr)
+ C_{\mathcal I}^\top \mathrm{Diag}\bigl(\alpha\,p_\delta''(C_{\mathcal I}z^\star_\delta - d_{\mathcal I})\bigr)\,
\partial_\theta(C_{\mathcal I}z^\star_\delta - d_{\mathcal I}).
\]
\fi
Since $\lim_{\delta \to 0} z^\star_\delta = z^\star$, for sufficiently small $\delta$ we have $(C_{\mathcal I}z^\star_\delta - d_{\mathcal I})_i \leq -\gamma + o(\gamma)$ for all $i \in \mathcal{I}$. Under the strict separation assumption for inactive constraints, $F_\delta$ exhibits the same exponential decay as $E_\delta$. Specifically, there exists $c_F > 0$ such that
$
\|F_\delta\|_2 \leq \frac{c_F}{\delta} e^{-(\gamma + o(\gamma))/\delta}.
$
Thus, although $F_\delta$ depends on $\delta$, it is asymptotically negligible.

For the remaining terms in~\eqref{eq:ztheta_exact_at_zdelta}, we replace $z^\star_\delta$ with $z^\star$. In particular, we take $\nabla^2_{z\theta} f(z; \theta)$ and set $G = \nabla^2_{z\theta} f(z^\star; \theta)$. Similarly, for the active curvature block, we substitute $z^\star_\delta$ with $z^\star$:
\ifnarrowcolumn
\begin{equation*}
\begin{aligned}
&A^\top \mathrm{Diag}\bigl(\rho\,\psi_\delta''(Az^\star - b)\bigr)(\partial_\theta Az^\star - \partial_\theta b) \\
&\quad + C_{\mathcal A}^\top \mathrm{Diag}\bigl(\alpha\,p_\delta''(C_{\mathcal A}z^\star - d_{\mathcal A})\bigr)
(\partial_\theta C_{\mathcal A}z^\star - \partial_\theta d_{\mathcal A}) \\
& = \tfrac{1}{\delta} B^\top W g_\theta.
\end{aligned}
\end{equation*}
\else
\[
A^\top \mathrm{Diag}\bigl(\rho\,\psi_\delta''(Az^\star - b)\bigr)(\partial_\theta Az^\star - \partial_\theta b) 
+ C_{\mathcal A}^\top \mathrm{Diag}\bigl(\alpha\,p_\delta''(C_{\mathcal A}z^\star - d_{\mathcal A})\bigr)
(\partial_\theta C_{\mathcal A}z^\star - \partial_\theta d_{\mathcal A}) 
= \tfrac{1}{\delta} B^\top W g_\theta.
\]
\fi

Bringing these components together, we obtain the plug-in estimate $Z_\delta^{\mathrm{plug}}$ by solving the following linear system:
\begin{equation}
\label{eq:schur_form_plug}
\ifnarrowcolumn
\begin{aligned}
&\left(P + \tfrac{1}{\delta} B^\top W B + E_\delta\right) Z_\delta^{\mathrm{plug}}\\
&\quad = -\left(G + (\partial_\theta B^\top)\,y^\star
+ \tfrac{1}{\delta} B^\top W g_\theta + F_\delta\right).
\end{aligned}
\else
\left(P + \tfrac{1}{\delta} B^\top W B + E_\delta\right) Z_\delta^{\mathrm{plug}}
= -\left(G + (\partial_\theta B^\top)\,y^\star
+ \tfrac{1}{\delta} B^\top W g_\theta + F_\delta\right).
\fi
\end{equation}

Empirically, we find that
$Z_\delta^{\mathrm{plug}}$ closely matches the reference gradients obtained by differentiating the KKT system of the original constrained problem (see \cref{sec:exp:grad-accuracy}). Next, we show that the plug-in sensitivity $Z_\delta^{\mathrm{plug}}$ asymptotically converges to the exact gradient $Z_{\texttt{KKT}}$. The proof is provided in Appendix~\ref{app:penalty_consistency}.

\begin{theorem}
\label{thm:delta_to_kkt}
Suppose $P$ is positive definite and that both LICQ and strict complementarity conditions are satisfied. Then, as $\delta \to 0$, the approximate sensitivity $Z_\delta^{\mathrm{plug}}$ defined in~\eqref{eq:schur_form_plug} converges to the exact gradient $Z_{\texttt{KKT}}$ given by~\eqref{eq:kkt_diff_reduced}.
\end{theorem}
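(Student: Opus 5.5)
Our plan is to treat \eqref{eq:schur_form_plug} as a singularly perturbed linear system and compare its solution to the reduced KKT system \eqref{eq:kkt_diff_reduced} by reformulating the latter as a penalty-type equation with an explicit residual. Write $y^\star=(\nu^\star,\mu_{\mathcal A}^\star)$ and $\Lambda \assign \partial_\theta y^\star$ for the multiplier sensitivity block, and $Z\assign Z_{\texttt{KKT}}$. The reduced system reads
\[
PZ + B^\top \Lambda = -\bigl(G + (\partial_\theta B^\top) y^\star\bigr),\qquad BZ = -g_\theta ,
\]
where we use that the first block-row right-hand side of \eqref{eq:kkt_diff_reduced} is exactly $G+(\partial_\theta B^\top)y^\star$. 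Under LICQ, $B$ has full row rank, and with $P\succ 0$ the KKT matrix is nonsingular, so $(Z,\Lambda)$ is well defined. Strict complementarity is only needed to justify the active-set structure already used to derive \eqref{eq:schur_form_plug} and Proposition~\ref{prop:zz}.

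The key algebraic step is to add $\tfrac1\delta B^\top W(BZ+g_\theta)=0$ to the first equation. This gives
\[
\Bigl(P+\tfrac1\delta B^\top W B\Bigr) Z = -\Bigl(G+(\partial_\theta B^\top)y^\star+\tfrac1\delta B^\top W g_\theta\Bigr) - B^\top\Lambda .
\]
Let $M_\delta\assign P+\tfrac1\delta B^\top WB+E_\delta$ and subtract from \eqref{eq:schur_form_plug}. The error $D_\delta\assign Z^{\mathrm{plug}}_\delta - Z$ then satisfies
\[
M_\delta D_\delta = B^\top \Lambda - E_\delta Z - F_\delta .
\]
It therefore suffices to show three things: $\|M_\delta^{-1}\|_2$ stays bounded, $\|M_\delta^{-1}B^\top\|_2\to 0$, and $\|E_\delta\|_2,\|F_\delta\|_2\to0$. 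The last holds by Proposition~\ref{prop:zz} and the exponential bound on $F_\delta$ stated before \eqref{eq:schur_form_plug}. Boundedness is easy: $W\succ0$ and $\|E_\delta\|\to0$ give $M_\delta\succeq P - \|E_\delta\|_2 I\succeq \tfrac12\lambda_{\min}(P) I$ for small $\delta$, so $\|M_\delta^{-1}\|_2\le 2/\lambda_{\min}(P)$.

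The main obstacle is showing $M_\delta^{-1}B^\top\to0$, since $\tfrac1\delta B^\top WB$ blows up only on $\operatorname{range}(B^\top)$. We first treat $M^0_\delta\assign P+\tfrac1\delta B^\top WB$ via the Woodbury identity:
\[
(M_\delta^0)^{-1}B^\top = P^{-1}B^\top\bigl(\delta W^{-1}+BP^{-1}B^\top\bigr)^{-1}\delta W^{-1}.
\]
Here LICQ makes $BP^{-1}B^\top\succ0$, so the middle inverse is bounded uniformly in $\delta$, and the expression is $O(\delta)$. To pass to $M_\delta$, we write $M_\delta^{-1}B^\top = (M_\delta^0)^{-1}B^\top - M_\delta^{-1}E_\delta (M_\delta^0)^{-1}B^\top$, which is also $O(\delta)$ because $\|M_\delta^{-1}\|$ is bounded.

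Combining these bounds,
\[
\|D_\delta\|_2\le \|M_\delta^{-1}B^\top\|_2\|\Lambda\|_2 + \|M_\delta^{-1}\|_2\bigl(\|E_\delta\|_2\|Z\|_2+\|F_\delta\|_2\bigr)\to 0 ,
\]
so $Z_\delta^{\mathrm{plug}}\to Z_{\texttt{KKT}}$, in fact at rate $O(\delta)$ up to exponentially small terms.
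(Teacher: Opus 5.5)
Your proof is correct, and it takes a different route from the paper's. The paper never inverts $M_\delta=P+\tfrac1\delta B^\top WB+E_\delta$ directly. It introduces the auxiliary block $\eta_\delta^{\mathrm{plug}}=\tfrac1\delta W(BZ_\delta^{\mathrm{plug}}+g_\theta)$ and rewrites \eqref{eq:schur_form_plug} as a regularized saddle-point system with matrix $K_\delta=\bigl[\begin{smallmatrix}P+E_\delta & B^\top\\ B & -\delta W^{-1}\end{smallmatrix}\bigr]$. It then compares this with the reduced KKT matrix $K_0$ by a Neumann-series perturbation bound, $\|K_\delta^{-1}\|_2\le 2\|K_0^{-1}\|_2$ for $\delta$ below a threshold.

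Lifting turns your singular perturbation $\tfrac1\delta B^\top WB$ into the regular perturbation $-\delta W^{-1}$, so a generic argument suffices. This buys two things. The argument uses only the nonsingularity of $K_0$, so it would still work if $P$ were merely positive definite on $\ker B$, where your Woodbury step with $P^{-1}$ is unavailable. It also yields the dual consistency $\eta_\delta^{\mathrm{plug}}\to\Lambda=\partial_\theta y^\star$ at no extra cost.

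Your primal-only argument isolates the single non-small residual $B^\top\Lambda$ and shows it is damped because it lies in the stiff subspace $\operatorname{range}(B^\top)$. It exploits the SPD structure the method is built around and gives explicit constants. In particular, $E_\delta\succeq 0$ and $W\succeq 0$ give $M_\delta\succeq P$, hence $\|M_\delta^{-1}\|_2\le 1/\lambda_{\min}(P)$ for every $\delta>0$ with no smallness threshold. The leading $O(\delta)$ term has constant $\|P^{-1}B^\top\|_2\|W^{-1}\|_2/\lambda_{\min}(BP^{-1}B^\top)$.

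Both arguments rest on the same inputs: LICQ, $W\succ 0$ (i.e.\ $\rho,\alpha>0$), and the stated exponential decay of $E_\delta$ and $F_\delta$. Both give the same $O(\delta)+O(\delta^{-1}e^{-\gamma/\delta})$ rate.
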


\subsection{Computational Advantages}

The complete training framework of \ee\ is outlined in \cref{alg:sp_qp_layer}. This framework comprises solving the forward QP with a black-box solver, penalty parameter scaling, softplus-based smoothing, and the implicit backward differentiation step. We now proceed to discuss the computational advantages offered by \ee.

\begin{algorithm}[tb]
  \caption{\ee: Differentiation through Black-Box Quadratic Programming Solvers and Penalty Formulation}
  \label{alg:sp_qp_layer}
  \begin{algorithmic}
    \STATE {\bfseries Input:} parameters $\theta$ defining $P,q,A,b,C,d$; smoothing strength $\delta>0$; penalty strength $\zeta \ge 1$
    \STATE {\bfseries Output:} $z^\star$ and Jacobian $\partial_\theta z^\star$
    \vspace{0.3em}

    \STATE {\bfseries Forward pass:}
    \STATE Solve \eqref{eq:qp} using a black-box QP solver to obtain $(z^\star,\nu^\star,\mu^\star)$
    \STATE Set $\rho = \zeta\|\nu^\star\|_\infty$, \quad $\alpha = \zeta\|\mu^\star\|_\infty$
    \vspace{0.3em}

    \STATE {\bfseries Backward pass:}
    \STATE Form the smoothed penalty objective in~\eqref{eq:phi_delta}
    \STATE Compute $\partial_\theta z^\star$ using~\eqref{eq:schur_form_plug}
  \end{algorithmic}
\end{algorithm}

\paragraph{Solver-agnostic forward pass}
Similar to dQP~\citep{magoon2024differentiation}, the forward map of \ee\ can be obtained by any convex QP solver, provided that the solver returns a primal solution $z^\star$ together with the corresponding dual multipliers $(\nu^\star,\mu^\star)$ for the equality and inequality constraints.
These multipliers are used only to set the penalty magnitudes $(\rho,\alpha)$.
This lets us flexibly choose the solver to match the underlying problem structure and hardware capabilities.

\paragraph{\ee\ as an embedding layer}
When \ee{} is embedded in a deep learning model, it is unnecessary to construct the Jacobian $Z_\delta^{\mathrm{plug}}$ explicitly. Let $\mathcal{L}$ denote the loss function, $r = \partial \mathcal{L}/\partial z^\star \in \mathbb{R}^n$ the upstream gradient, and $H = P + \tfrac{1}{\delta} B^\top W B + E_\delta$ the penalty Hessian. The vector-Jacobian product can then be computed by first solving $H u = r$ for $u \in \mathbb{R}^n$, and subsequently evaluating
$
\partial_\theta \mathcal{L} = -\left(G + (\partial_\theta B^\top)\,y^\star + \tfrac{1}{\delta} B^\top W g_\theta + F_\delta \right)^\top u.
$
Accordingly, each backward pass entails solving a single $n \times n$ SPD linear system per evaluation, independent of the dimension $s$ of the problem parameters.

\paragraph{Reduction to primal-dimensional linear systems}
Traditional approaches to differentiating through QP layers rely on differentiating the KKT system, which leads to solving an indefinite linear system of size $n+p+m$ as in~\eqref{eq:kkt_diff_naive}, or $n+p+|\mathcal{A}|$ as in~\eqref{eq:kkt_diff_reduced} after restricting to the active set $\mathcal{A}$~\citep{amos2017optnet,agrawal2019differentiable,magoon2024differentiation}. 
In contrast, the backward pass~\eqref{eq:schur_form_plug} in \ee\ only involves the primal variable dimension $n$. 
Furthermore, the matrix $H$ inherits and integrates the sparsity structures of both $P$ and $B$. Owing to its symmetric positive definite (SPD) nature, $H$ is well suited for state-of-the-art sparse Cholesky factorizations \citep{chen2008algorithm} and preconditioned conjugate gradient methods \citep{nocedal2006numerical}, which typically offer superior speed and numerical stability compared to indefinite KKT solvers at similar sparsity levels. Furthermore, \ee{} yields a stable surrogate sensitivity even when the KKT system~\eqref{eq:kkt_diff_reduced} is degenerate: when the LICQ or strict complementarity conditions do not hold, $H$ stays strictly positive definite as long as $P \in \mathbb{S}_{++}^n$. Consequently, the gradient $Z_\delta^{\mathrm{plug}}$ remains well defined. 


\paragraph{Support for active-set pruning}
Unlike dQP~\citep{magoon2024differentiation}, in which the backward pass involves only the active constraints, the backward pass of \ee{} explicitly accounts for inactive constraints. This distinction arises because the softplus smoothing term remains nonzero for $t < 0$. Nevertheless, the contributions from strictly inactive inequalities, represented by $E_\delta$ and $F_\delta$ in~\eqref{eq:schur_form_plug}, vanish as $\delta \to 0$. In practice, we implement active-set pruning by omitting the $E_\delta$ and $F_\delta$ terms, and we observe empirically that this does not adversely affect performance.


\section{Experiments}

We evaluate \ee\ by studying gradient accuracy on random strictly convex QPs, scalability on large-scale sparse projection problems and an end-to-end multi-period portfolio optimization task.
We use the unified setting in Algorithm~\ref{alg:sp_qp_layer} with smoothing strength $\delta=10^{-6}$ and penalty strength $\zeta=10$.
For a fair comparison, the forward QP solves for both \ee\ and dQP are performed using Gurobi~\citep{gurobi}.
Additional experimental details are provided in Appendix~\ref{app:exp}.

\subsection{Gradient Accuracy}
\label{sec:exp:grad-accuracy}

\begin{table}[t]
\centering
\caption{Gradient discrepancy between our softplus-penalty QP layer and dQP~\citep{magoon2024differentiation}.
We report the mean and standard deviation of the relative difference. The QP size
$n \times m$ denotes $n$ variables with $m$ equality and $m$ inequality constraints.}
\label{tab:grad-accuracy}
\begin{tabular}{@{}c c c@{}}
\toprule
QP size & Avg. $\epsilon_{\mathrm{rel}}$ & Std. $\epsilon_{\mathrm{rel}}$ \\
\midrule
10 $\times$ 5       & 1.91 $\times$ 10$^{-7}$ & 1.40 $\times$ 10$^{-7}$ \\
50 $\times$ 10      & 8.55 $\times$ 10$^{-8}$ & 3.57 $\times$ 10$^{-8}$ \\
100 $\times$ 20     & 2.64 $\times$ 10$^{-7}$ & 9.50 $\times$ 10$^{-8}$ \\
500 $\times$ 100    & 5.21 $\times$ 10$^{-6}$ & 6.50 $\times$ 10$^{-7}$ \\
1000 $\times$ 200   & 1.74 $\times$ 10$^{-5}$ & 2.25 $\times$ 10$^{-6}$ \\
1500 $\times$ 500   & 5.36 $\times$ 10$^{-5}$ & 3.51 $\times$ 10$^{-6}$ \\
3000 $\times$ 1000  & 1.91 $\times$ 10$^{-4}$ & 1.23 $\times$ 10$^{-5}$ \\
5000 $\times$ 2000  & 5.14 $\times$ 10$^{-4}$ & 2.39 $\times$ 10$^{-5}$ \\
\bottomrule
\end{tabular}
\end{table}

\begin{table*}[t]
\centering
\setlength{\tabcolsep}{3pt}
\renewcommand{\arraystretch}{1.2}
\caption{Effect of the smoothing scale on gradient accuracy. We choose
$\delta \propto \rho_\delta\|\mathrm{KKT}\|$ and report the mean and standard
deviation of the relative gradient discrepancy.}
\label{tab:delta-sensitivity}
\begin{tabular}{@{}c c c c c c c@{}}
\toprule
QP size & $\|\mathrm{KKT}\|$ &
$\rho_\delta=10^{-5}$ &
$\rho_\delta=10^{-6}$ &
$\rho_\delta=10^{-7}$ &
$\rho_\delta=10^{-8}$ &
$\rho_\delta=10^{-9}$ \\
\midrule
\multirow{2}{*}{$10 \times 5$} &
\multirow{2}{*}{$4.92 \times 10^{1}$} &
$3.67{\times}10^{-5}$ &
$1.48{\times}10^{-6}$ &
$\mathbf{1.91{\times}10^{-7}}$ &
$9.07{\times}10^{-7}$ &
$1.11{\times}10^{-5}$ \\
& &
$\pm 3.38{\times}10^{-5}$ &
$\pm 7.51{\times}10^{-7}$ &
$\mathbf{\pm 1.40{\times}10^{-7}}$ &
$\pm 6.54{\times}10^{-7}$ &
$\pm 7.42{\times}10^{-6}$ \\
\addlinespace
\multirow{2}{*}{$100 \times 20$} &
\multirow{2}{*}{$1.43 \times 10^{3}$} &
$1.07{\times}10^{-6}$ &
$2.03{\times}10^{-7}$ &
$\mathbf{1.03{\times}10^{-7}}$ &
$2.61{\times}10^{-7}$ &
$3.02{\times}10^{-6}$ \\
& &
$\pm 2.45{\times}10^{-7}$ &
$\pm 6.00{\times}10^{-8}$ &
$\mathbf{\pm 1.60{\times}10^{-8}}$ &
$\pm 1.03{\times}10^{-7}$ &
$\pm 1.94{\times}10^{-6}$ \\
\addlinespace
\multirow{2}{*}{$1000 \times 200$} &
\multirow{2}{*}{$4.47 \times 10^{4}$} &
$8.83{\times}10^{-7}$ &
$2.78{\times}10^{-7}$ &
$\mathbf{1.83{\times}10^{-7}}$ &
$1.58{\times}10^{-6}$ &
$1.12{\times}10^{-5}$ \\
& &
$\pm 1.70{\times}10^{-7}$ &
$\pm 5.10{\times}10^{-8}$ &
$\mathbf{\pm 1.19{\times}10^{-8}}$ &
$\pm 2.03{\times}10^{-7}$ &
$\pm 1.37{\times}10^{-5}$ \\
\addlinespace
\multirow{2}{*}{$5000 \times 2000$} &
\multirow{2}{*}{$5.00 \times 10^{5}$} &
$3.18{\times}10^{-6}$ &
$7.25{\times}10^{-7}$ &
$\mathbf{2.62{\times}10^{-7}}$ &
$2.11{\times}10^{-6}$ &
$1.63{\times}10^{-5}$ \\
& &
$\pm 2.40{\times}10^{-7}$ &
$\pm 6.10{\times}10^{-8}$ &
$\mathbf{\pm 4.56{\times}10^{-9}}$ &
$\pm 1.52{\times}10^{-7}$ &
$\pm 1.21{\times}10^{-5}$ \\
\bottomrule
\end{tabular}
\end{table*}

\begin{table*}[ht]
\centering
\small
\setlength{\tabcolsep}{3.5pt}
\renewcommand{\arraystretch}{1.08}
\caption{Runtime analysis for projection onto the probability simplex.}
\begin{tabular}{c c cccccccc}
\toprule
\multicolumn{1}{c}{\multirow[c]{2}{*}{Solver}} &
\multicolumn{1}{c}{\multirow[c]{2}{*}{Metric}} &
\multicolumn{8}{c}{Problem Size} \\
\cmidrule(lr){3-10}
& & 20 & 100 & 450 & 1000 & 4600 & 10000 & 100000 & 1000000 \\
\midrule
\multirow{2}{*}{\ee} & Backward (ms) & 0.72 & 0.82 & \textbf{0.91} & \textbf{0.92} & \textbf{1.09} & \textbf{1.51} & \textbf{8.35} & \textbf{226.21} \\
 & Total (ms) & 5.01 & \textbf{5.13} & \textbf{6.27} & \textbf{7.73} & \textbf{22.48} & \textbf{46.09} & \textbf{494.02} & \textbf{5212.38} \\
\addlinespace
\multirow{2}{*}{dQP} & Backward (ms) & 0.82 & 0.89 & 0.99 & 1.51 & 4.63 & 10.01 & 83.79 & 950.64 \\
 & Total (ms) & 5.55 & 5.95 & 7.11 & 9.22 & 26.55 & 56.50 & 572.64 & 6103.44 \\
\addlinespace
\multirow{2}{*}{OptNet} & Backward (ms) & \textbf{0.54} & \textbf{0.74} & 2.84 & 18.41 & 372.73 & 2647.40 & -- & -- \\
 & Total (ms) & 5.47 & 8.21 & 46.48 & 286.28 & 4716.01 & 34121.95 & -- & -- \\
\addlinespace
\multirow{2}{*}{SCQPTH} & Backward (ms) & 0.87 & 2.62 & -- & -- & -- & -- & -- & -- \\
 & Total (ms) & 53.40 & 481.95 & -- & -- & -- & -- & -- & -- \\
\addlinespace
\multirow{2}{*}{CVXPYLayers} & Backward (ms) & 1.23 & 1.72 & -- & -- & -- & -- & -- & -- \\
 & Total (ms) & \textbf{3.15} & 7.37 & -- & -- & -- & -- & -- & -- \\
\bottomrule
\end{tabular}
\label{tab:scalability_random_projection}
\end{table*}

\begin{table*}[ht]
\centering
\small
\setlength{\tabcolsep}{3.5pt}
\renewcommand{\arraystretch}{1.08}
\caption{Runtime analysis for projection onto chains.}
\begin{tabular}{c c cccccccc}
\toprule
\multicolumn{1}{c}{\multirow[c]{2}{*}{Solver}} &
\multicolumn{1}{c}{\multirow[c]{2}{*}{Metric}} &
\multicolumn{8}{c}{Problem Size} \\
\cmidrule(lr){3-10}
& & 200 & 500 & 1000 & 2000 & 4000 & 10000 & 100000 & 1000000 \\
\midrule
\multirow{2}{*}{\ee} & Backward (ms) & \textbf{1.06} & \textbf{1.74} & \textbf{2.59} & \textbf{3.05} & \textbf{4.08} & \textbf{7.01} & \textbf{25.09} & \textbf{333.42} \\
 & Total (ms) & \textbf{13.24} & \textbf{15.61} & \textbf{27.82} & \textbf{45.56} & \textbf{81.93} & \textbf{214.52} & \textbf{2495.59} & \textbf{29306.28} \\
\addlinespace
\multirow{2}{*}{dQP} & Backward (ms) & 1.49 & 2.17 & 3.72 & 3.76 & 7.90 & 17.07 & 195.27 & 3071.18 \\
 & Total (ms) & 14.14 & 16.97 & 29.30 & 47.08 & 85.00 & 225.94 & 2695.72 & 32201.43 \\
\addlinespace
\multirow{2}{*}{OptNet} & Backward (ms) & 7.25 & 14.00 & 43.75 & 202.62 & 1190.40 & 12873.17 & -- & -- \\
 & Total (ms) & 46.76 & 225.79 & 759.01 & 3210.28 & 19107.02 & 235462.01 & -- & -- \\
\addlinespace
\multirow{2}{*}{SCQPTH} & Backward (ms) & 18.93 & 35.43 & 162.59 & 842.21 & 5002.86 & 68260.63 & -- & -- \\
 & Total (ms) & 47.68 & 87.53 & 897.21 & 3512.24 & 14956.00 & 110042.63 & -- & -- \\
\addlinespace
\multirow{2}{*}{CVXPYLayers} & Backward (ms) & 41.01 & -- & -- & -- & -- & -- & -- & -- \\
 & Total (ms) & 105.88 & -- & -- & -- & -- & -- & -- & -- \\
\bottomrule
\end{tabular}
\label{tab:scalability_chain}
\end{table*}

We first assess whether the gradients produced by \ee\ are numerically reliable. Following the benchmark of BPQP~\citep{pan2024bpqp}, we construct random strictly convex QPs of different sizes. The problem size is denoted by $n \times m$, where $n$ is the number of decision variables, and we use $m$ equality constraints and $m$ inequality constraints.
Concretely, we sample a random matrix $P^{\prime}\in\mathbb{R}^{n\times n}$ and construct a positive definite quadratic term $P = P^{\prime}P^{\prime\top} + 10^{-6} I$ to ensure strict convexity. Linear terms and constraints are drawn from a standard normal distribution. To guarantee feasibility,
we fix a reference point $z_0=\mathbf{1}$ at the all-ones vector and set the constraint right-hand sides as
$d = C z_0 + \mathbf{1}$ and $b = A z_0$.

As a reference for gradient correctness, we compare our gradients against those produced by
dQP~\citep{magoon2024differentiation}.
Let $g_{\texttt{\ee}}$ be the gradient returned by our layer and $g_{\texttt{dQP}}$ be the gradient returned by dQP. We report the relative difference
\[\epsilon_{\mathrm{rel}}
\;=\;
\frac{\left\lVert g_{\texttt{\ee}} - g_{\texttt{dQP}} \right\rVert_2}{\left\lVert g_{\texttt{dQP}} \right\rVert_2}.
\]
Table~\ref{tab:grad-accuracy} summarizes the mean and standard deviation of $\epsilon_{\mathrm{rel}}$ over multiple random instances for each problem size. 
For each size, we report statistics over 50 independently generated random instances.
Across all tested problem sizes, the relative gradient discrepancy remains small.
Specifically, the average $\epsilon_{\mathrm{rel}}$ is on the order of $10^{-7}$--$10^{-4}$ as the problem
scale increases from $10\times 5$ to $5000\times 2000$.
As expected, the discrepancy increases with the problem dimension and the number of constraints,
reflecting the accumulation of numerical errors in larger-scale instances.
Nevertheless, even at the largest scale considered, the relative difference remains below $10^{-3}$,
indicating that the backward pass of \ee\ remains numerically reliable.

We further study the sensitivity of \ee\ to the smoothing parameter $\delta$.
To compare instances of different scales, we choose $\delta$ by a scale-aware rule
$\delta \propto \rho_\delta\|\mathrm{KKT}\|$, rounded to the nearest power of ten,
where $\rho_\delta$ is a multiplicative scale factor.
Table~\ref{tab:delta-sensitivity} shows that the best accuracy is attained at an intermediate smoothing level:
large $\delta$ introduces smoothing bias, while very small $\delta$ makes the backward system more numerically sensitive.

\subsection{Scalability on Large-Scale Sparse Problems}

\begin{figure*}[t]
    \centering
    \includegraphics[width=\textwidth]{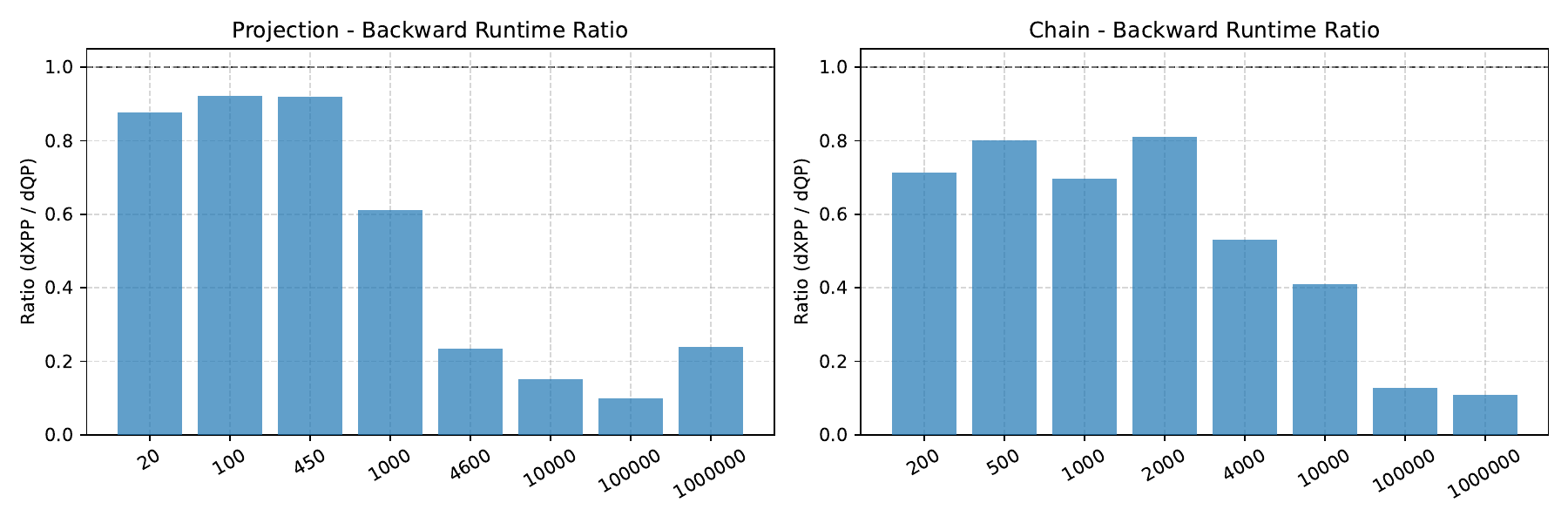}
    \caption{
    Backward runtime ratio ($t_{\texttt{\ee}}^{\texttt{bwd}}/t_{\texttt{dQP}}^{\texttt{bwd}}$) for structured projection problems. 
    The black dashed line represents the baseline (dQP's backward time normalized to 1). 
    Exact results are reported in Table~\ref{tab:scalability_random_projection} and Table~\ref{tab:scalability_chain}.
    }
    \label{fig:scalability_backward}
\end{figure*}

We evaluate the scalability of \ee\ on two large-scale sparse projection problems~\citep{magoon2024differentiation}.
The first task is projection onto the probability simplex:
\begin{equation}\tag{$P_1$}
\label{eq:proj_simplex}
\begin{aligned}
\argmin_{z\in \mathbb{R}^n}
&
\ \|x-z\|_2^2 \\
\text{s.t.}\quad
& 0 \le z \le 1,\quad \sum_{i=1}^n z_i = 1 ,
\end{aligned}
\end{equation}
where the problem size corresponds to the dimension of $x$ and $z$.
The second task is projection onto a chain with $\ell_\infty$-bounded differences:
\begin{equation}\tag{$P_2$}
\label{eq:proj_chain}
\begin{aligned}
\argmin_{z\in \mathbb{R}^n}
&
\ \sum_{j=1}^n \|x_j-z_j\|_2^2 \\
\text{s.t.}\quad
& \|z_j-z_{j+1}\|_\infty \le 1 , \ 1\le j \le  n-1,
\end{aligned}
\end{equation}
where the problem size corresponds to the chain length $n$.
Both problems are strictly convex projections with simple, highly structured constraints, making them well-suited for evaluating differentiation scalability.
We compare \ee\ against dQP, OptNet, SCQPTH, and CVXPYLayers over a wide range of problem sizes and report wall-clock runtime in milliseconds. 
The results are summarized in Table~\ref{tab:scalability_random_projection} and Table~\ref{tab:scalability_chain}.

We first note that \ee\ and dQP share a key advantage in the forward pass: both are solver-agnostic. In particular, both leverage Gurobi, a commercial solver based on interior-point methods, which efficiently handles the forward solve even at the largest scale of $10^6$ variables.
In contrast, OptNet relies on a custom primal--dual interior-point method implemented in PyTorch, while SCQPTH uses an ADMM-based operator-splitting solver implemented as a custom PyTorch module.
Meanwhile, CVXPYLayers transforms the QP into a standard second-order cone program; at our scale, this reduction and the associated data-construction overhead become prohibitive.
Consequently, these prior methods are limited to substantially smaller problem sizes, with SCQPTH and CVXPYLayers failing to complete beyond dimensions of $10^2$--$10^3$.

In the backward pass, \ee\ demonstrates superior scalability compared to dQP. 
For problem~\eqref{eq:proj_simplex}, \ee\ is competitive at small dimensions and becomes increasingly favorable as the dimension grows. 
Specifically, \ee\ takes $226.21$\,ms at size $10^6$ versus $950.64$\,ms for dQP, yielding a $4.2\times$ speedup at the largest scale. 
For problem~\eqref{eq:proj_chain}, \ee\ substantially outperforms dQP at large scales: at size $10^5$, the backward pass is $25.09$\,ms for \ee\ versus $195.27$\,ms for dQP ($7.8\times$ speedup), and at size $10^6$ it is $333.42$\,ms versus $3071.18$\,ms ($9.2\times$ speedup). 
Overall, these results indicate that \ee\ sustains stable and efficient differentiation over six orders of magnitude in the problem dimension, whereas the runtimes of other baseline layers grow rapidly and do not extend to the largest sizes.

To illustrate the scalability of \ee\ compared with dQP, we report the backward runtime ratio $(t_{\texttt{\ee}}^{\texttt{bwd}}/t_{\texttt{dQP}}^{\texttt{bwd}})$ in Figure~\ref{fig:scalability_backward}.
It can be seen that \ee's backward time is consistently faster than dQP's backward time and exhibits significant speedups as the problem size increases.

\subsection{End-to-End Multi-Period Portfolio Optimization}

\begin{table*}[t]
\centering
\small
\setlength{\tabcolsep}{3.5pt}
\renewcommand{\arraystretch}{1.08}
\caption{Average per-epoch runtime for end-to-end multi-period portfolio optimization. Both \ee\ and dQP are run in dense mode.}
\begin{tabular}{c c cccccc}
\toprule
\multicolumn{1}{c}{\multirow[c]{2}{*}{Solver}} &
\multicolumn{1}{c}{\multirow[c]{2}{*}{Metric}} &
\multicolumn{6}{c}{Horizon $H$} \\
\cmidrule(lr){3-8}
& & 10 & 20 & 50 & 100 & 150 & 200 \\
\midrule
\multirow{2}{*}{\ee} & Backward (ms) & \textbf{1.71} & \textbf{2.91} & \textbf{10.29} & \textbf{36.66} & \textbf{68.72} & \textbf{113.97} \\
 & Total (ms) & \textbf{11.91} & \textbf{18.07} & \textbf{60.58} & \textbf{210.41} & \textbf{496.52} & \textbf{977.98} \\
\addlinespace
\multirow{2}{*}{dQP} & Backward (ms) & 17.68 & 68.00 & 694.68 & 3830.83 & 16583.49 & 39105.75 \\
 & Total (ms) & 27.50 & 85.57 & 763.92 & 4049.86 & 17122.70 & 40066.11 \\
\addlinespace
\multirow{2}{*}{OptNet} & Backward (ms) & 3.24 & 8.51 & 34.82 & 117.34 & 293.70 & 605.89 \\
 & Total (ms) & 35.32 & 98.33 & 538.97 & 1911.39 & 4698.60 & 10978.19 \\
\addlinespace
\multirow{2}{*}{SCQPTH} & Backward (ms) & 6.55 & 13.80 & 96.02 & 429.49 & 1777.38 & 3368.87 \\
 & Total (ms) & 1074.22 & 1641.08 & 8121.46 & 41659.76 & 114059.42 & 194166.95 \\
\addlinespace
\multirow{2}{*}{CVXPYLayers} & Backward (ms) & 7.76 & -- & -- & -- & -- & -- \\
 & Total (ms) & 25.52 & -- & -- & -- & -- & -- \\
\bottomrule
\end{tabular}
\label{tab:portfolio_realworld_runtime}
\end{table*}

We further demonstrate the scalability of \ee\ on an end-to-end learning problem that couples prediction with multi-period portfolio optimization. We embed a multi-period mean-variance program~\citep{boyd2017} as a differentiable layer parameterized by the predictor's return forecasts, and train the predictor by minimizing a portfolio-level decision loss computed from realized performance \citep{elmachtoub2022smart}.
This setting is a typical application of decision-focused learning in finance, but it becomes challenging at scale \citep{butler2023integrating}.

In portfolio optimization, many asset weights commonly lie on their bounds \citep{sharpe1963simplified}, resulting in a large number of active constraints for which strict complementarity can fail \citep{nocedal2006numerical}.
In such settings, KKT-based differentiation can yield severely ill-conditioned or even degenerate linear systems. As a result, sparse solvers may fail \citep{cannataro2016state}. Practical implementations (e.g., \citet{magoon2024differentiation}) often rely on damping or least-squares solves, which may introduce bias or instability in the resulting sensitivities.
Dense factorizations with pivoting can improve numerical stability when solving these linear systems \citep{bunch1977some}. Accordingly, we run both \ee\ and dQP in this dense mode to ensure numerical stability and a fair comparison.

We consider seven tradable ETFs (VTI, IWM, AGG, LQD, MUB, DBC, GLD) and use different investment horizons $H$, which directly control the scale of the optimization problem.
At each decision date $t$, we solve a multi-stage mean-variance planning problem
using predicted returns and execute the first-step allocation $w_{t+1}^\star$.
The decision variables are the per-period portfolio weights $\{w_s\}_{s=t+1}^{t+H}$. This yields the following bilevel formulation:
\begin{align}
\small
\min_{\theta}
& \sum_{s=t+1}^{t+H}
\Bigl(
- r_{s}^\top w_{s}^\star(\theta)
+\frac{\lambda}{2}(w_{s}^\star(\theta))^\top \Sigma_{s} w_{s}^\star(\theta)
\Bigr) \nonumber \\
\text{s.t.} \ \
& \mathbf{w}^\star(\theta)=
\arg\min_{\mathbf{w}}
 \sum_{s=t+1}^{t+H}
\Bigl(
-\hat r_s(\theta)^\top w_s
+\frac{\lambda}{2}\, w_s^\top \hat\Sigma_s w_s
\Bigr) \nonumber \\
       & \text{s.t.}\ \ 
w_s \ge 0, \ \ \mathbf{1}^\top w_s = 1, \nonumber \\
& \hspace{2em}
\|w_s-w_{s-1}\|_1 \le \tau. \label{eq:bilevel_portfolio_mv_avg}
\end{align}
Here, $\lambda > 0$ is the risk aversion parameter, $\tau>0$ is a global turnover budget, $\hat r_s(\theta)$ is produced by a linear predictor from past ETF returns, and $r_s$ denotes the realized return vector.
For each period $s$, $\hat\Sigma_s$ denotes the covariance matrix estimated by returns before date $t$, whereas $\Sigma_s$ denotes the corresponding realized return covariance used to evaluate risk in the outer objective. Details of the training process are provided in Appendix~\ref{app:portfolio_details}.

Table~\ref{tab:portfolio_realworld_runtime} and Figure~\ref{fig:portfolio_realworld_summary} report average per-epoch runtime statistics as the investment horizon $H$ increases.
We observe that \ee\ is highly efficient and scales nearly linearly in backward time, increasing from $1.71$\,ms at $H{=}10$ to $113.97$\,ms at $H{=}200$.
In contrast, dQP's backward runtime grows dramatically, reaching $39105.75$\,ms at $H{=}200$ (nearly $343\times$ slower than \ee).
OptNet scales more stably than dQP, but remains $5.3\times$ slower than \ee\ in the backward pass at $H{=}200$ and results in higher overall training time.
SCQPTH and CVXPYLayers either do not scale to larger horizons or hit memory limits.
Moreover, \ee\ achieves a faster forward pass due to the efficiency of the underlying Gurobi solver. Although both \ee\ and dQP use Gurobi, dQP incurs substantial overhead when forming the KKT-based linear system \eqref{eq:kkt_diff_reduced} in dense mode.
Overall, these results indicate that \ee\ provides order-of-magnitude speedups while retaining the numerical robustness required for stable end-to-end learning in complex optimization settings where strict complementarity frequently fails.

\begin{figure}[!t]
\centering
\includegraphics[width=0.8\columnwidth]{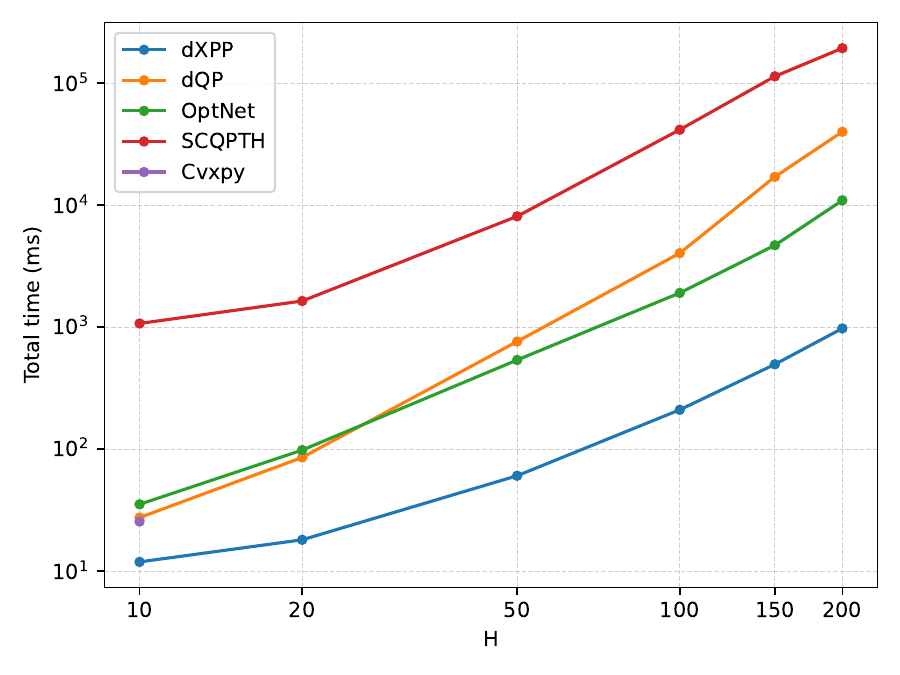}
\caption{Visualization of the runtime results in Table~\ref{tab:portfolio_realworld_runtime}.}
\label{fig:portfolio_realworld_summary}
\end{figure}


\section{Conclusion}

We present \ee, a penalty-based framework for differentiating through convex quadratic programs (QPs). Leveraging a softplus-smoothed penalty reformulation, \ee\ decouples the forward and backward passes, reducing gradient computation to the solution of primal-dimensional linear systems, which circumvents the numerical challenges associated with KKT-based differentiation. 
Empirically, \ee\ achieves significant speedups and maintains high accuracy across both random QPs and large-scale sparse projection problems, offering a solver-agnostic and scalable optimization layer for differentiable programming.
While we have focused on convex QPs, the penalty-based differentiation 
framework of \ee\ extends naturally to more general convex programs.
Exploring such extensions to broader convex programming problems is a promising direction for future work.

\section*{Acknowledgements}
This project was supported in part by the National Natural Science Foundation of China (Grants 12571325, 72394364/72394360) and the Natural Science Foundation of Shanghai (Grant 24ZR1421300).


\section*{Impact Statement}

This work aims to advance the field of machine learning by improving differentiable optimization techniques. While our methods have the potential for broad societal impact, we do not identify any specific societal risks or concerns that require special attention at this time.


\nocite{langley00}

\bibliography{references}

@article{li2023softplus,
  title={On the softplus penalty for large-scale convex optimization},
  author={Li, Meng and Grigas, Paul and Atamt{\"u}rk, Alper},
  journal={Operations Research Letters},
  volume={51},
  number={6},
  pages={666--672},
  year={2023},
  publisher={Elsevier}
}

@article{donti2017task,
  title={Task-based end-to-end model learning in stochastic optimization},
  author={Donti, Priya and Amos, Brandon and Kolter, J Zico},
  journal={Advances in neural information processing systems},
  volume={30},
  year={2017}
}

@misc{gurobi,
  author = {{Gurobi Optimization, LLC}},
  title = {{Gurobi Optimizer Reference Manual}},
  year = 2024,
  url = "https://www.gurobi.com"
}

@inproceedings{ye2020reinforcement,
  title={Reinforcement-learning based portfolio management with augmented asset movement prediction states},
  author={Ye, Yunan and Pei, Hengzhi and Wang, Boxin and Chen, Pin-Yu and Zhu, Yada and Xiao, Ju and Li, Bo},
  booktitle={Proceedings of the AAAI conference on artificial intelligence},
  volume={34},
  number={01},
  pages={1112--1119},
  year={2020}
}

@article{chen2008algorithm,
  title={Algorithm 887: {CHOLMOD}, supernodal sparse Cholesky factorization and update/downdate},
  author={Chen, Yanqing and Davis, Timothy A and Hager, William W and Rajamanickam, Sivasankaran},
  journal={ACM Transactions on Mathematical Software (TOMS)},
  volume={35},
  number={3},
  pages={1--14},
  year={2008},
  publisher={ACM New York, NY, USA}
}

@book{nocedal2006numerical,
  title={Numerical optimization},
  author={Nocedal, Jorge and Wright, Stephen J},
  year={2006},
  publisher={Springer}
}

@article{evans1973exact,
  title={Exact penalty functions in nonlinear programming},
  author={Evans, J. P. and Gould, F. J. and Tolle, J. W.},
  journal={Mathematical Programming},
  volume={4},
  number={1},
  pages={72--97},
  year={1973},
  doi={10.1007/BF01584647}
}

@article{han1979exact,
  title={Exact penalty functions in nonlinear programming},
  author={Han, S-P and Mangasarian, Olvi L},
  journal={Mathematical programming},
  volume={17},
  number={1},
  pages={251--269},
  year={1979},
  publisher={Springer}
}

@article{di1989exact,
  title={Exact penalty functions in constrained optimization},
  author={Di Pillo, Gianni and Grippo, Luigi},
  journal={SIAM Journal on control and optimization},
  volume={27},
  number={6},
  pages={1333--1360},
  year={1989},
  publisher={SIAM}
}

@inproceedings{bertsekas2000enhanced,
  title={Enhanced optimality conditions and exact penalty functions},
  author={Bertsekas, Dimitri P and Koksal, Asuman E},
  booktitle={Proceedings of Allerton conference},
  year={2000}
}

@article{rockafellar1974augmented,
  title={Augmented Lagrange multiplier functions and duality in nonconvex programming},
  author={Rockafellar, R Tyrrell},
  journal={SIAM Journal on Control},
  volume={12},
  number={2},
  pages={268--285},
  year={1974},
  publisher={SIAM}
}

@article{hestenes1969multiplier,
  title={Multiplier and gradient methods},
  author={Hestenes, Magnus R},
  journal={Journal of optimization theory and applications},
  volume={4},
  number={5},
  pages={303--320},
  year={1969},
  publisher={Springer}
}

@article{powell1969method,
  title={A method for nonlinear constraints in minimization problems},
  author={Powell, Michael JD},
  journal={Optimization},
  pages={283--298},
  year={1969},
  publisher={Academic Press}
}

@article{nesterov2005smooth,
  title={Smooth minimization of non-smooth functions},
  author={Nesterov, Yu},
  journal={Mathematical programming},
  volume={103},
  number={1},
  pages={127--152},
  year={2005},
  publisher={Springer}
}

@article{meng2006smoothing,
  title={On the smoothing of the square-root exact penalty function for inequality constrained optimization},
  author={Meng, Zhiqing and Dang, Chuangyin and Yang, Xiaoqi},
  journal={Computational Optimization and Applications},
  volume={35},
  number={3},
  pages={375--398},
  year={2006},
  publisher={Springer}
}

@article{xu2011penalty,
  title={A penalty function method based on smoothing lower order penalty function},
  author={Xu, Xinsheng and Meng, Zhiqing and Sun, Jianwu and Shen, Rui},
  journal={Journal of Computational and Applied Mathematics},
  volume={235},
  number={14},
  pages={4047--4058},
  year={2011},
  publisher={Elsevier}
}

@article{dolgopolik2016smooth,
  title={Smooth exact penalty functions: a general approach},
  author={Dolgopolik, Maksim V},
  journal={Optimization Letters},
  volume={10},
  number={3},
  pages={635--648},
  year={2016},
  publisher={Springer}
}

@article{estrin2020implementing,
  title={Implementing a smooth exact penalty function for general constrained nonlinear optimization},
  author={Estrin, Ron and Friedlander, Michael P and Orban, Dominique and Saunders, Michael A},
  journal={SIAM Journal on Scientific Computing},
  volume={42},
  number={3},
  pages={A1836--A1859},
  year={2020},
  publisher={SIAM}
}

@article{xu2019smoothing,
  title={On smoothing l 1 exact penalty function for constrained optimization problems},
  author={Xu, Xinsheng and Dang, Chuangyin and Chan, Felix TS and Wang, Yongli},
  journal={Numerical Functional Analysis and Optimization},
  volume={40},
  number={1},
  pages={1--18},
  year={2019},
  publisher={Taylor \& Francis}
}

@inproceedings{hong2023constrained,
  title={Constrained optimization via exact augmented lagrangian and randomized iterative sketching},
  author={Hong, Ilgee and Na, Sen and Mahoney, Michael W and Kolar, Mladen},
  booktitle={International Conference on Machine Learning},
  pages={13174--13198},
  year={2023},
  organization={PMLR}
}

@article{deng2025augmented,
  title={The Augmented Lagrangian Methods: Overview and Recent Advances},
  author={Deng, Kangkang and Wang, Rui and Zhu, Zhenyuan and Zhang, Junyu and Wen, Zaiwen},
  journal={arXiv preprint arXiv:2510.16827},
  year={2025}
}

@article{butler2023efficient,
  title={Efficient differentiable quadratic programming layers: an ADMM approach},
  author={Butler, Andrew and Kwon, Roy H},
  journal={Computational Optimization and Applications},
  volume={84},
  number={2},
  pages={449--476},
  year={2023},
  publisher={Springer}
}

@article{butler2023scqpth,
  title={SCQPTH: an efficient differentiable splitting method for convex quadratic programming},
  author={Butler, Andrew},
  journal={arXiv preprint arXiv:2308.08232},
  year={2023}
}

@inproceedings{
sun2022alternating,
title={Alternating Differentiation for Optimization Layers},
author={Haixiang Sun and Ye Shi and Jingya Wang and Hoang Duong Tuan and H. Vincent Poor and Dacheng Tao},
booktitle={The Eleventh International Conference on Learning Representations },
year={2023},
}

@inproceedings{bambade2024leveraging,
  title={Leveraging augmented-Lagrangian techniques for differentiating over infeasible quadratic programs in machine learning},
  author={Bambade, Antoine and Schramm, Fabian and Taylor, Adrien and Carpentier, Justin},
  booktitle={ICLR 2024-The Twelfth International Conference on Learning Representations},
  year={2024}
}

@article{agrawal2019differentiating,
  title={Differentiating through a cone program},
  author={Agrawal, Akshay and Barratt, Shane and Boyd, Stephen and Busseti, Enzo and Moursi, Walaa M},
  journal={arXiv preprint arXiv:1904.09043},
  year={2019}
}

@article{blondel2022efficient,
  title={Efficient and modular implicit differentiation},
  author={Blondel, Mathieu and Berthet, Quentin and Cuturi, Marco and Frostig, Roy and Hoyer, Stephan and Llinares-L{\'o}pez, Felipe and Pedregosa, Fabian and Vert, Jean-Philippe},
  journal={Advances in neural information processing systems},
  volume={35},
  pages={5230--5242},
  year={2022}
}

@article{pineda2022theseus,
  title={Theseus: A library for differentiable nonlinear optimization},
  author={Pineda, Luis and Fan, Taosha and Monge, Maurizio and Venkataraman, Shobha and Sodhi, Paloma and Chen, Ricky TQ and Ortiz, Joseph and DeTone, Daniel and Wang, Austin and Anderson, Stuart and others},
  journal={Advances in Neural Information Processing Systems},
  volume={35},
  pages={3801--3818},
  year={2022}
}

@inproceedings{
magoon2024differentiation,
title={Differentiation Through Black-Box Quadratic Programming Solvers},
author={Connor W. Magoon and Fengyu Yang and Noam Aigerman and Shahar Z. Kovalsky},
booktitle={The Thirty-ninth Annual Conference on Neural Information Processing Systems},
year={2025},
}

@inproceedings{amos2017optnet,
  title     = {{O}pt{N}et: Differentiable Optimization as a Layer in Neural Networks},
  author    = {Amos, Brandon and Kolter, J. Zico},
  booktitle = {Proceedings of the 34th International Conference on Machine Learning},
  pages     = {136--145},
  year      = {2017},
  editor    = {Precup, Doina and Teh, Yee Whye},
  volume    = {70},
  publisher = {PMLR},
}

@article{agrawal2019differentiable,
  title={Differentiable convex optimization layers},
  author={Agrawal, Akshay and Amos, Brandon and Barratt, Shane and Boyd, Stephen and Diamond, Steven and Kolter, J Zico},
  journal={Advances in neural information processing systems},
  volume={32},
  year={2019}
}

@article{pan2024bpqp,
  title={{BPQP}: A Differentiable Convex Optimization Framework for Efficient End-to-End Learning},
  author={Pan, Jianming and Ye, Zeqi and Yang, Xiao and Yang, Xu and Liu, Weiqing and Wang, Lewen and Bian, Jiang},
  journal={Advances in Neural Information Processing Systems},
  volume={37},
  pages={77468--77493},
  year={2024}
}

@article{sharpe1963simplified,
  title={A simplified model for portfolio analysis},
  author={Sharpe, William F},
  journal={Management science},
  volume={9},
  number={2},
  pages={277--293},
  year={1963},
  publisher={INFORMS}
}

@article{bunch1977some,
  title={Some stable methods for calculating inertia and solving symmetric linear systems},
  author={Bunch, James R and Kaufman, Linda},
  journal={Mathematics of computation},
  pages={163--179},
  year={1977},
  publisher={JSTOR}
}

@misc{boyd2017,
      title={Multi-Period Trading via Convex Optimization}, 
      author={Stephen Boyd and Enzo Busseti and Steven Diamond and Ronald N. Kahn and Kwangmoo Koh and Peter Nystrup and Jan Speth},
      year={2017},
      eprint={1705.00109},
      archivePrefix={arXiv},
      primaryClass={q-fin.PM},
}

@article{elmachtoub2022smart,
  title={Smart “predict, then optimize”},
  author={Elmachtoub, Adam N and Grigas, Paul},
  journal={Management Science},
  volume={68},
  number={1},
  pages={9--26},
  year={2022},
  publisher={INFORMS}
}

@article{butler2023integrating,
  title={Integrating prediction in mean-variance portfolio optimization},
  author={Butler, Andrew and Kwon, Roy H},
  journal={Quantitative finance},
  volume={23},
  number={3},
  pages={429--452},
  year={2023},
  publisher={Taylor \& Francis}
}

@article{cannataro2016state,
  title={State-defect constraint pairing graph coarsening method for Karush--Kuhn--Tucker matrices arising in orthogonal collocation methods for optimal control},
  author={Cannataro, Beg{\"u}m {\c{S}}enses and Rao, Anil V and Davis, Timothy A},
  journal={Computational Optimization and Applications},
  volume={64},
  number={3},
  pages={793--819},
  year={2016},
  publisher={Springer}
}
\bibliographystyle{icml2026}

\newpage
\clearpage
\appendix
\onecolumn

\section{Proof of Proposition~\ref{prop:exact_penalty_exactness}}
\label{app:exact_penalty_proof}

\begin{proof}
The function $F(\cdot;\theta,\rho,\alpha)$ is convex because both $f$ and the penalty terms are convex functions. Therefore, it is sufficient to show that $0\in \partial F(z^\star)$. Without loss of generality, we assume that $\nu^\star$ and $\mu^\star$ are nonzero. If either were zero, the constraints would be trivially satisfied at $z^\dagger = \argmin_z f(z)$, in which case, $z^\dagger=z^\star$, and for any $\rho, \alpha \ge 0$, $z^\dagger$ would clearly minimize $F(z;\theta,\rho,\alpha)$.

We first characterize suitable subgradients for the penalty terms at $z^\star$.
Since $Az^\star=b$, we have
\[
\partial\|Az^\star-b\|_1
=
\bigl\{\,s\in\mathbb{R}^p:\ s_j\in[-1,1],\, 1\le j\le p\,\bigr\}.
\]
Because $\rho\ge \|\nu^\star\|_\infty$, the vector $s:=\nu^\star/\rho$ satisfies $s\in[-1,1]^p$ and thus
$A^\top \nu^\star = \rho A^\top s$.

Next, define $\phi(u)=\|[u]_+\|_1=\sum_{i=1}^m \max\{u_i,0\}$. Then
\[
\partial \phi(u)_i=
\begin{cases}
\{0\}, & u_i<0,\\
[0,1], & u_i=0,\\
\{1\}, & u_i>0.
\end{cases}
\]
At the feasible point $z^\star$ we have $Cz^\star-d\le 0$. By complementary slackness,
$\mu_i^\star=0$ whenever $(Cz^\star-d)_i<0$. For indices with $(Cz^\star-d)_i=0$, the assumption
$\alpha\ge \|\mu^\star\|_\infty$ implies $\mu_i^\star/\alpha \in [0,1]$.
Therefore, if we define $t\in\mathbb{R}^m$ componentwise by
\[
t_i :=
\begin{cases}
0, & (Cz^\star-d)_i<0,\\
\mu_i^\star/\alpha, & (Cz^\star-d)_i=0,
\end{cases}
\]
then $t\in \partial\|[Cz^\star-d]_+\|_1$ and $\alpha\, C^\top t = C^\top \mu^\star$.

Combining these choices with the KKT stationarity condition gives
\[
0
=
\nabla f(z^\star)+A^\top \nu^\star + C^\top \mu^\star
=
\nabla f(z^\star) + \rho A^\top s + \alpha C^\top t
\in
\nabla f(z^\star) + \rho A^\top \partial\|Az^\star-b\|_1 + \alpha C^\top \partial\|[Cz^\star-d]_+\|_1
=
\partial F(z^\star).
\]
Thus $0\in \partial F(z^\star)$, and convexity of $F$ implies that $z^\star$ is a global minimizer of
$\min_z F(z;\theta,\rho,\alpha)$.

\end{proof}

\section{Proof of Proposition~\ref{prop:zz}}
\label{app:zz_proof}

\begin{proof}
Let $\sigma(u) = \frac{1}{1 + e^{-u}}$.
Then we have the following useful identities:
\begin{equation}
\begin{aligned}
  p_\delta'(t)   &= \sigma(t/\delta), \\
  p_\delta''(t)  &= \frac{1}{\delta}\,\sigma(t/\delta)\bigl(1-\sigma(t/\delta)\bigr), \\
  \psi_\delta'(t)  &= \sigma(t/\delta) - \sigma(-t/\delta) = \tanh\!\bigl(t/(2\delta)\bigr), \\
  \psi_\delta''(t) &= \frac{1}{2\delta}\,\mathrm{sech}^2\!\bigl(t/(2\delta)\bigr).
\end{aligned}
\end{equation}
Hence $p_\delta''(0) = \frac{1}{4\delta}$ and $\psi_\delta''(0) = \frac{1}{2\delta}$.
Moreover, if $t \le -\gamma$, then $p_\delta'(t) \le e^{-\gamma/\delta}$ and
$p_\delta''(t) \le \frac{1}{\delta} e^{-\gamma/\delta}$.

Differentiating $\Phi_\delta$ twice with respect to $z$, the objective contributes $P = \nabla^2_{zz} f(z^\star; \theta)$.
For the equality penalties, using $Az^\star - b = 0$ and $\psi_\delta''(0) = \frac{1}{2\delta}$, we have
\[
\nabla^2_{zz} \Bigl(\rho \sum_{j=1}^p \psi_\delta((Az-b)_j)\Bigr) \Big|_{z=z^\star}
= \rho A^\top \mathrm{Diag}\bigl(\psi_\delta''(0)\bigr) A
= \frac{\rho}{2\delta} A^\top A.
\]
For active inequalities, using $C_{\mathcal A}z^\star - d_{\mathcal A} = 0$ and $p_\delta''(0) = \frac{1}{4\delta}$, we have
\[
\nabla^2_{zz} \Bigl(\alpha \sum_{i\in\mathcal A} p_\delta((Cz-d)_i)\Bigr) \Big|_{z=z^\star}
= \alpha C_{\mathcal A}^\top \mathrm{Diag}\bigl(p_\delta''(0)\bigr) C_{\mathcal A}
= \frac{\alpha}{4\delta} C_{\mathcal A}^\top C_{\mathcal A}.
\]
For inactive constraints $i \in \mathcal I$, we have $(Cz^\star - d)_i \le -\gamma$. 
The inactive Hessian component is
\[
E_\delta
=
\alpha\, C_{\mathcal I}^\top \mathrm{Diag}\!\bigl(p_\delta''(s_{\mathcal I})\bigr)\, C_{\mathcal I}.
\]
Since $p_\delta''(t) \le \frac{1}{\delta} e^{-\gamma/\delta}$ for all $t\le -\gamma$, it follows that $\|E_\delta\|_2 \le \frac{c_E}{\delta} e^{-\gamma/\delta}$ for a constant $c_E > 0$ independent of $\delta$.
Thus, we obtain the decomposition
\begin{equation}
\label{eq:phi_hess_decomp_pf}
\nabla^2_{zz} \Phi_\delta(z^\star; \theta)
= P + \frac{1}{\delta} B^\top W B + E_\delta.
\end{equation}

\end{proof}

\section{Proof of Proposition~\ref{prop:ztheta_at_zdelta}}
\label{app:ztheta_delta_proof}

\begin{proof}
Differentiating \eqref{eq:phi_delta} with respect to $z$, we obtain
\begin{equation}
\label{eq:diffz}
\nabla_z\Phi_\delta(z;\theta)
=
\nabla_z f(z;\theta)
+
\rho A^\top \psi_\delta'(Az-b)
+
\alpha C^\top p_\delta'(Cz-d).
\end{equation}
Differentiating \eqref{eq:diffz} with respect to $\theta$ and evaluate at $z=z^\star_\delta$, we have
\begin{equation}
\label{eq:app_ztheta_expand_start}
\begin{aligned}
\nabla^2_{z\theta}\Phi_\delta\bigl(z^\star_\delta;\theta\bigr)
=\;&\nabla^2_{z\theta} f\bigl(z^\star_\delta;\theta\bigr)\\
&+(\partial_\theta A^\top)\bigl(\rho\,\psi_\delta'(Az^\star_\delta-b)\bigr)
+A^\top \mathrm{Diag}\bigl(\rho\,\psi_\delta''(Az^\star_\delta-b)\bigr)\,\partial_\theta(Az^\star_\delta-b)\\
&+(\partial_\theta C^\top)\bigl(\alpha\,p_\delta'(Cz^\star_\delta-d)\bigr)
+C^\top \mathrm{Diag}\bigl(\alpha\,p_\delta''(Cz^\star_\delta-d)\bigr)\,\partial_\theta(Cz^\star_\delta-d).
\end{aligned}
\end{equation}
Next, we partition $C$ and $d$ by active and inactive sets:
\[
C=
\begin{bmatrix}
C_{\mathcal A}\\
C_{\mathcal I}
\end{bmatrix},
\qquad
d=
\begin{bmatrix}
d_{\mathcal A}\\
d_{\mathcal I}
\end{bmatrix}.
\]
Then the inequality terms in \eqref{eq:app_ztheta_expand_start} split as
\[
(\partial_\theta C^\top)\bigl(\alpha\,p_\delta'(Cz^\star_\delta-d)\bigr)
=
(\partial_\theta C_{\mathcal A}^\top)\bigl(\alpha\,p_\delta'(C_{\mathcal A}z^\star_\delta-d_{\mathcal A})\bigr)
+
(\partial_\theta C_{\mathcal I}^\top)\bigl(\alpha\,p_\delta'(C_{\mathcal I}z^\star_\delta-d_{\mathcal I})\bigr),
\]
\[
\begin{aligned}
&C^\top \mathrm{Diag}\bigl(\alpha\,p_\delta''(Cz^\star_\delta-d)\bigr)\,\partial_\theta(Cz^\star_\delta-d)\\
=\;&C_{\mathcal A}^\top
\mathrm{Diag}\bigl(\alpha\,p_\delta''(C_{\mathcal A}z^\star_\delta-d_{\mathcal A})\bigr)\,
\partial_\theta(C_{\mathcal A}z^\star_\delta-d_{\mathcal A})
+C_{\mathcal I}^\top
\mathrm{Diag}\bigl(\alpha\,p_\delta''(C_{\mathcal I}z^\star_\delta-d_{\mathcal I})\bigr)\,
\partial_\theta(C_{\mathcal I}z^\star_\delta-d_{\mathcal I}).
\end{aligned}
\]
Substituting it back into \eqref{eq:app_ztheta_expand_start} yields \eqref{eq:ztheta_exact_at_zdelta}.
\end{proof}

\section{Proof of Theorem~\ref{thm:delta_to_kkt}}
\label{app:penalty_consistency}
\begin{proof}

By the notations defined in \cref{sec:plug}, \eqref{eq:kkt_diff_reduced} is equal to the following linear system:
\begin{equation}
\label{eq:kkt_sensitivity_pf}
\begin{bmatrix}
P & B^\top \\
B & 0
\end{bmatrix}
\begin{bmatrix}
Z_{\texttt{KKT}} \\
\Lambda
\end{bmatrix}
= -\begin{bmatrix}
G + (\partial_\theta B^\top)y^\star \\
g_\theta
\end{bmatrix},
\end{equation}
where $\Lambda = \partial_\theta y^\star$.
Next, for $Z_\delta^{\mathrm{plug}}$, we define an auxiliary variable
$\eta_\delta^{\mathrm{plug}} = \tfrac{1}{\delta} W (B Z_\delta^{\mathrm{plug}} + g_\theta) \in \mathbb{R}^{(p + |\mathcal A|) \times s}.$
Then \eqref{eq:schur_form_plug} is equivalent to the block system
\begin{equation}
\label{eq:reg_kkt_system_pf}
\begin{bmatrix}
P + E_\delta & B^\top \\
B & -\delta W^{-1}
\end{bmatrix}
\begin{bmatrix}
Z_\delta^{\mathrm{plug}} \\
\eta_\delta^{\mathrm{plug}}
\end{bmatrix}
= -
\begin{bmatrix}
G + (\partial_\theta B^\top) y^\star + F_\delta \\
g_\theta
\end{bmatrix}.
\end{equation}
Let
\[
K_0 = \begin{bmatrix} P & B^\top \\ B & 0 \end{bmatrix}, \quad
X_0 = \begin{bmatrix} Z_{\texttt{KKT}} \\ \Lambda \end{bmatrix}, \quad
K_\delta = \begin{bmatrix}
P + E_\delta & B^\top \\
B & -\delta W^{-1}
\end{bmatrix}, \quad
X_\delta = \begin{bmatrix} Z_\delta^{\mathrm{plug}} \\ \eta_\delta^{\mathrm{plug}} \end{bmatrix}.
\]
Define the right-hand sides as
\[
R^\star = \begin{bmatrix} G + (\partial_\theta B^\top) y^\star \\ g_\theta \end{bmatrix}, \quad
R_\delta = \begin{bmatrix} G + (\partial_\theta B^\top) y^\star + F_\delta \\ g_\theta \end{bmatrix}.
\]
Then \eqref{eq:kkt_sensitivity_pf} is $K_0 X_0 = -R^\star$, and \eqref{eq:reg_kkt_system_pf} is $K_\delta X_\delta = -R_\delta$. Since
\[
K_\delta - K_0 = \begin{bmatrix} E_\delta & 0 \\ 0 & -\delta W^{-1} \end{bmatrix},
\]
we have $\|K_\delta - K_0\|_2 \le \|E_\delta\|_2 + \delta \|W^{-1}\|_2$.
Since $K_0$ is nonsingular under LICQ and $P \succ 0$, let $M = \|K_0^{-1}\|_2$.
Choose $\delta_0 > 0$ such that for all $0 < \delta \le \delta_0$,
$M \bigl(\|E_\delta\|_2 + \delta \|W^{-1}\|_2\bigr) \le \tfrac12$.
Then the Neumann-series bound yields
\begin{equation}
\label{eq:inv_bound_pf}
\|K_\delta^{-1}\|_2
\le \frac{M}{1 - M \|K_\delta - K_0\|_2}
\le 2M.
\end{equation}
Now decompose
$X_\delta - X_0 = (K_\delta^{-1} - K_0^{-1}) (-R^\star) + K_\delta^{-1} \bigl(-(R_\delta - R^\star)\bigr).$
For the first term, using $K_\delta^{-1} - K_0^{-1} = K_0^{-1} (K_0 - K_\delta) K_\delta^{-1}$, we obtain
\[
\|(K_\delta^{-1} - K_0^{-1}) (-R^\star)\|_2
\le \|K_0^{-1}\|_2 \|K_\delta - K_0\|_2 \|K_\delta^{-1}\|_2 \|R^\star\|_2
\le (2M^2 \|R^\star\|_2) \bigl(\delta \|W^{-1}\|_2 + \|E_\delta\|_2\bigr).
\]
For the second term, note that
\[
R_\delta - R^\star = 
\begin{bmatrix}
F_\delta \\
0
\end{bmatrix},
\]
hence by \eqref{eq:inv_bound_pf},
\[
\|K_\delta^{-1} \bigl(-(R_\delta - R^\star)\bigr)\|_2
\le \|K_\delta^{-1}\|_2 \|F_\delta\|_2
\le 2M \|F_\delta\|_2.
\]
Combining the two bounds and using $\|E_\delta\|_2 \le \frac{c_E}{\delta} e^{-\gamma/\delta}$ and
$\|F_\delta\|_2 \le \frac{c_F}{\delta} e^{-\gamma/\delta}$, we obtain
\begin{equation}
\label{eq:main_bound_pf}
\|X_\delta - X_0\|_2
\le K_1 \delta + \frac{K_2}{\delta} e^{-\gamma/\delta},
\end{equation}
for constants $K_1, K_2 > 0$ independent of $\delta$.
Taking the primal block yields
\begin{equation}
\label{eq:main_bound_primal_pf}
\|Z_\delta^{\mathrm{plug}} - Z_{\texttt{KKT}}\|_2
\le K_1 \delta + \frac{K_2}{\delta} e^{-\gamma/\delta}.
\end{equation}
Both terms on the right-hand side vanish as $\delta \to 0$, which implies that the
plug-in sensitivity $Z_\delta^{\mathrm{plug}}$ satisfies $\|Z_\delta^{\mathrm{plug}} - Z_{\texttt{KKT}}\|_2 \to 0$ as $\delta \to 0$.
Furthermore, the auxiliary variable $\eta_\delta^{\mathrm{plug}} = \tfrac{1}{\delta} W (B Z_\delta^{\mathrm{plug}} + g_\theta)$ provides a consistent estimate of the KKT dual sensitivity $\Lambda = \partial_\theta y^\star$.
The total block convergence $\|X_\delta - X_0\|_2 \to 0$ implies that $\|\eta_\delta^{\mathrm{plug}} - \Lambda\|_2 \to 0$ as $\delta \to 0$.

\end{proof}

\section{Experimental Details}
\label{app:exp}

All experiments were conducted on a machine with an Intel\textsuperscript{\textregistered} Core\textsuperscript{TM} Ultra~9~285H CPU and an NVIDIA GeForce RTX~5070 Laptop GPU (8\,GB), using PyTorch~2.2 with CUDA~12.4.

\subsection{Projection Experiments}
\label{app:projection}

For each problem size, we summarize the mean runtime statistics over multiple random instances.
In both \ee\ and dQP, the Gurobi solver is run with an absolute residual tolerance of
$\epsilon_{\mathrm{abs}}=10^{-6}$, and the active set is identified using the threshold
$\epsilon_{\mathcal A}=10^{-5}$.

\subsubsection{Projection onto the probability simplex}
\label{app:projection:simplex}

For each dimension $n$, the input $x \in \mathbb{R}^n$ is sampled with i.i.d.\ entries
$x_i \sim \mathcal{N}(0, 1)$.
We evaluate a dataset of $325$ instances with
\[
n \in \{20, 100, 450, 1000, 4600, 10000, 100000, 1000000\}.
\]
To balance coverage at large scale, we generate $50$ independent instances for each $n\le 4600$,
and $25$ instances for each $n>4600$.

\subsubsection{Projection onto chains}
\label{app:projection:chain}

We fix the number of points at $m=100$ and sample the inputs as
$x_j \in \mathbb{R}^d$ with $x_j \sim \mathcal N(0,100 I_d)$.
Letting $n = md$ denote the total number of primal variables, we vary $d$ to obtain
\[
n \in \{200, 500, 1000, 2000, 4000, 10000, 100000, 1000000\}.
\]
This yields $325$ instances in total: we generate $50$ independent problems for each $n\le 4000$,
and $25$ problems for each $n>4000$.

\subsection{End-to-End Multi-Period Portfolio Optimization}
\label{app:portfolio_details}

This section provides additional implementation details for the bilevel portfolio experiment in
\eqref{eq:bilevel_portfolio_mv_avg}, including the rolling-window training procedure and the standard
QP formulation of the inner problem. 

The dataset consists of daily ETF prices over the period 2011--2024.
We use daily returns of 7 tradable ETFs and rebalance the portfolio daily. 
For each decision date $t$, we form a feature window $X_t$ consisting of the past 120 days of ETF returns
and predict a horizon-$H$ sequence of returns via a linear predictor.
Following a standard rolling retraining pattern, the predictor is retrained periodically
every 20 trading days using a look-back window of 120 supervised samples.
Covariances are estimated in a rolling manner from 20-day historical returns prior to $t$ and are stabilized by adding a small diagonal ridge term to ensure positive definiteness.
Each training epoch corresponds to one full pass through the chronological sequence of trading days in the dataset. Specifically, within each epoch, the following sequence of operations is performed for every decision date $t$: (i) the feature window $X_t$ is fed into the linear predictor to generate horizon-$H$ return forecasts $\hat{r}(\theta)$; (ii) the inner multi-period portfolio optimization problem \eqref{eq:bilevel_portfolio_mv_avg} is solved to obtain the optimal weights $w^\star(\theta)$; (iii) the decision loss is computed by comparing the optimal weights against realized returns; (iv) gradients are backpropagated through the differentiable QP layer to update the model parameters $\theta$. 
The runtime results presented in Table~\ref{tab:portfolio_realworld_runtime} are calculated as the average over 100 training epochs.

Fix a decision date $t$ and horizon $H$. To express the inner portfolio optimization in \eqref{eq:bilevel_portfolio_mv_avg} as a standard quadratic program, we introduce auxiliary variables $u_{t+k} \in \mathbb{R}^N$ to linearize the $\ell_1$ turnover constraints and define the augmented decision vector $\tilde{x} \in \mathbb{R}^{2NH}$. The resulting standard QP form is:
\begin{equation}
\label{eq:app_portfolio_qp_standard}
\begin{aligned}
\min_{\tilde{x} \in \mathbb{R}^{2NH}} \quad & \frac{1}{2} \tilde{x}^\top P \tilde{x} + q(\theta)^\top \tilde{x} \\
\text{s.t.} \quad & A\tilde{x} = b, \quad G\tilde{x} \le h \\
\text{where} \quad & \tilde{x} = \begin{bmatrix} w \\ u \end{bmatrix}, \quad 
P = \begin{bmatrix} \lambda \hat{\Sigma} & 0 \\ 0 & 0 \end{bmatrix}, \quad 
q(\theta) = \begin{bmatrix} -\hat{r}(\theta) \\ 0 \end{bmatrix} \\
& w = [w_{t+1}^\top, \ldots, w_{t+H}^\top]^\top, \quad u = [u_{t+1}^\top, \ldots, u_{t+H}^\top]^\top \\
& \hat{\Sigma} = \mathrm{blkdiag}(\hat\Sigma_{t+1}, \ldots, \hat\Sigma_{t+H}), \quad 
\hat{r}(\theta) = [\hat r_{t+1}(\theta)^\top, \ldots, \hat r_{t+H}(\theta)^\top]^\top \\
\text{and constraints} \quad & \forall k \in \{1, \ldots, H\}: \\
& \mathbf{1}^\top w_{t+k} = 1, \quad w_{t+k} \ge 0, \quad u_{t+k} \ge 0 \\
& -u_{t+k} \le w_{t+k} - w_{t+k-1} \le u_{t+k} \\
& \mathbf{1}^\top u_{t+k} \le \tau
\end{aligned}
\end{equation}
where $w_t$ is treated as the known pre-trade portfolio constant for the $k=1$ turnover constraint.
As formulated in \eqref{eq:app_portfolio_qp_standard}, the complexity of the inner QP scales linearly with the investment horizon $H$. 
Each planning stage $k \in \{1, \ldots, H\}$ introduces a weight vector $w_{t+k} \in \mathbb{R}^N$ and an auxiliary vector $u_{t+k} \in \mathbb{R}^N$, resulting in an augmented primal dimension of $\tilde{n} = 2NH$. 
Similarly, the number of constraints in $A$ and $G$ grows linearly with $H$: there are $H$ equality constraints for simplex normalization, and $O(NH)$ linear inequalities arising from the turnover bounds, the turnover budget $\mathbf{1}^\top u_{t+k} \le \tau$, and the non-negativity constraints. 
This linear scaling makes the horizon $H$ the primary parameter for controlling the problem size in our dense-mode scalability study.

\subsection{Additional Experiments: Sudoku}
\label{app:sudoku}

\begin{figure}[t]
\centering
\includegraphics[width=0.49\textwidth]{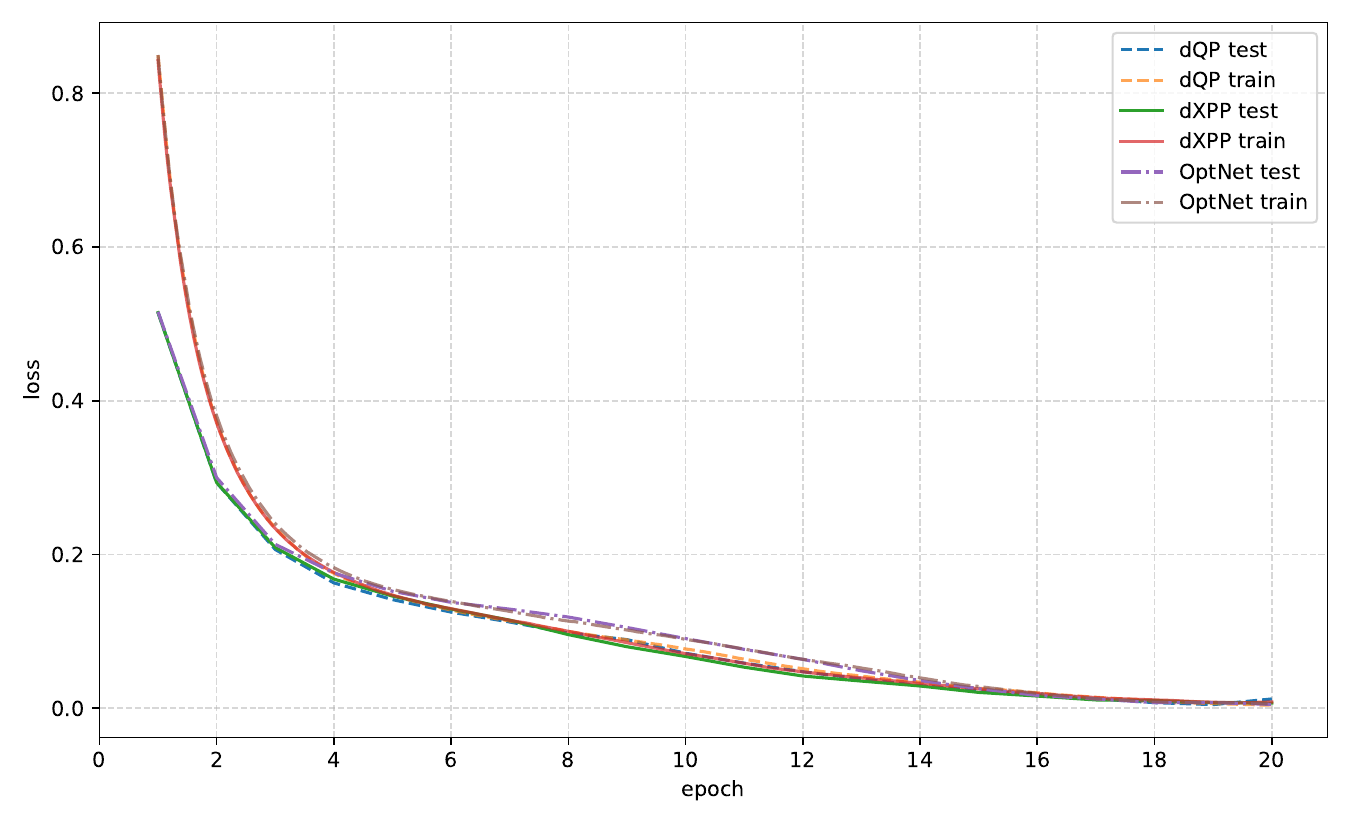}
\includegraphics[width=0.49\textwidth]{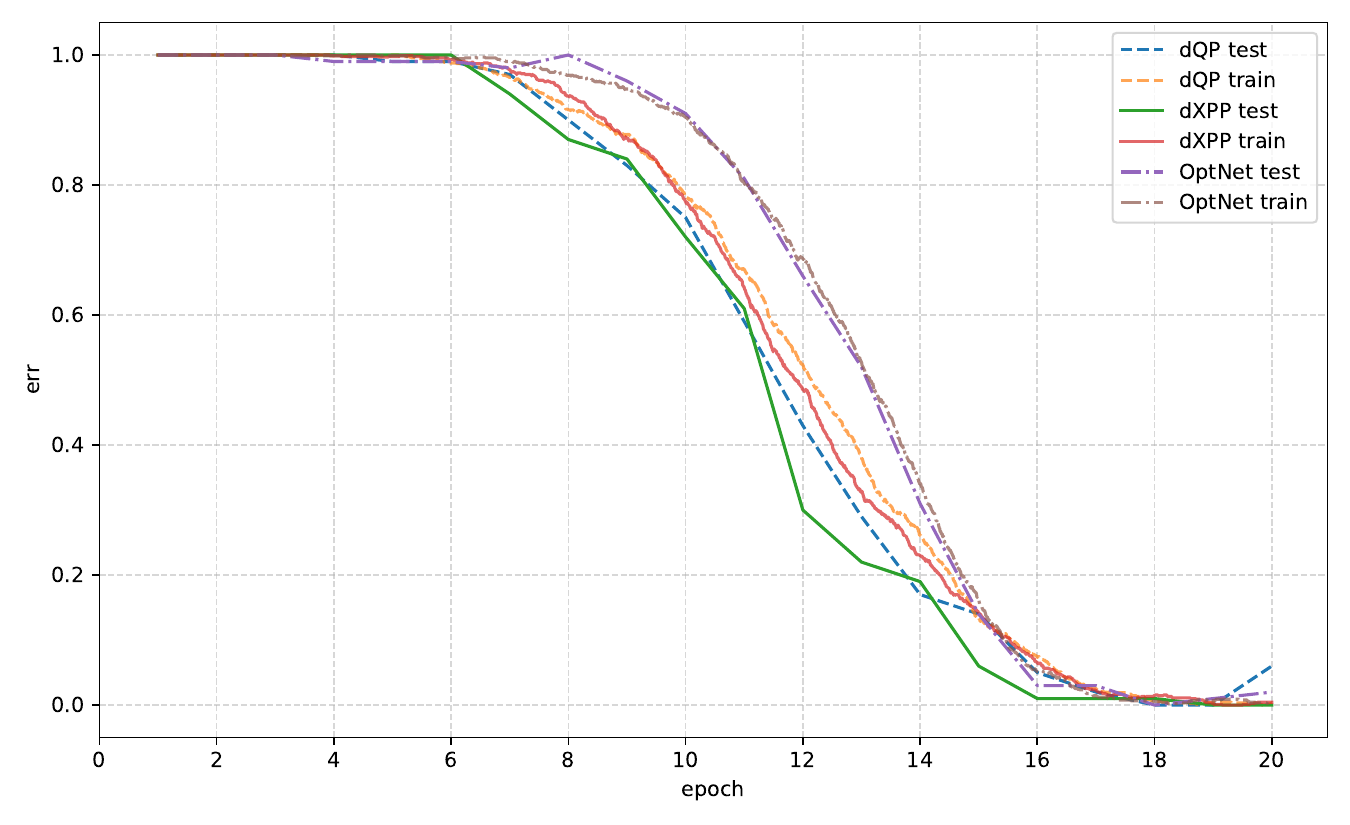}
\caption{Sudoku experiments: training loss (left) and error rate (right) for \ee, dQP, and OptNet.}
\label{fig:sudoku_curves}
\end{figure}

This section reports additional experiments on the Sudoku benchmark.
We follow the experimental setting in \citet[Sec.~4]{amos2017optnet}, where solving Sudoku is cast as a differentiable optimization layer.
Concretely, each instance is a size-$n$ Sudoku puzzle with $n\in\{2,3,4\}$, corresponding to $4{\times}4$, $9{\times}9$, and $16{\times}16$ boards.
At each training step, the model takes a puzzle as input and predicts per-cell scores over candidate digits. An optimization layer enforces Sudoku constraints and outputs a structured solution, and the loss is computed against the ground-truth solution.
We compare \ee\ with dQP and OptNet in both runtime and training dynamics.

\paragraph{Training dynamics.}
Figure~\ref{fig:sudoku_curves} displays the evolution of both the training loss and the error rate over the course of training.
The error rate is measured as the fraction of puzzles for which the predicted solution does not exactly match the ground truth.
As shown, \ee, dQP, and OptNet exhibit remarkably similar convergence behaviors: the loss functions decay at comparable rates, and the error trajectories follow nearly identical downward trends.
This strong alignment confirms that the gradients produced by \ee\ are as effective for learning structured constraints as those from established KKT-based differentiation methods, ensuring stable and reliable end-to-end training.

\paragraph{Runtime efficiency.}
Table~\ref{tab:sudoku_time} summarizes the average runtime per epoch.
\ee\ consistently outperforms the baselines across all board sizes. Notably, as the problem scale increases (e.g., for $n=4$), the computational advantage of \ee\ becomes increasingly significant, further validating its scalability in structured prediction tasks.

\begin{table}[t]
\centering
\caption{Average per-epoch runtime comparison for Sudoku experiments (ms).}
\label{tab:sudoku_time}
\begin{tabular}{c c c c}
\toprule
Board size $n$ & \ee\ & dQP   & OptNet \\
\midrule
2 & \textbf{12.15} & 12.81  & 12.16 \\
3 & \textbf{196.21} & 261.86  & 217.04 \\
4 & \textbf{2591.54} & 3109.08  & 7877.82 \\
\bottomrule
\end{tabular}
\end{table}

\end{document}